\DeclareMathOperator*{\argmax}{arg\,max}
\newtheorem{thm}{Theorem}
\newtheorem{prop}{Proposition}
\newtheorem{lem}{Lemma}
\theoremstyle{definition}
\newtheorem{assumption}{Assumption}
\newtheorem{example}{Example}
\newenvironment{assumptionp}[1]{
  
  \assumptionalt
}{\endassumptionalt}
\def\B{\textbf{b}}
\def\E{\mathbb{E}}
\def\x{\boldsymbol{x}}
\def\a{\boldsymbol{a}}
\def\B{\boldsymbol{B}}
\def\b{\boldsymbol{b}}
\def\d{\boldsymbol{d}}
\def\s{\boldsymbol{s}}
\def\a{\boldsymbol{a}}
\def\r{\boldsymbol{r}}
\title{Weakly Coupled Deep Q-Networks}
\author{%
  Ibrahim El Shar\\
  Hitachi America Ltd., University of Pittsburgh\\
  Sunnyvale, CA \\
  \texttt{ibrahim.elshar@hal.hitachi.com} \\
   \And
   Daniel Jiang \\
   Meta, University of Pittsburgh \\
   New York, NY \\
   \texttt{drjiang@meta.com} \\
}
\begin{document}

\maketitle

\begin{abstract}

\looseness-1 We propose \emph{weakly coupled deep Q-networks} (WCDQN), a novel deep reinforcement learning algorithm that enhances performance in a class of structured problems called \emph{weakly coupled Markov decision processes} (WCMDP). WCMDPs consist of multiple independent subproblems connected by an action space constraint, which is a structural property that frequently emerges in practice. Despite this appealing structure, WCMDPs quickly become intractable as the number of subproblems grows. WCDQN employs a single network to train multiple DQN ``subagents,'' one for each subproblem, and then combine their solutions to establish an upper bound on the optimal action value. This guides the main DQN agent towards optimality. We show that the tabular version, weakly coupled Q-learning (WCQL), converges almost surely to the optimal action value. Numerical experiments show faster convergence compared to DQN and related techniques in settings with as many as 10 subproblems, $3^{10}$ total actions, and a continuous state space.

\end{abstract}

\section{Introduction}

\looseness-1 Despite achieving many noteworthy and highly visible successes, it remains widely acknowledged that practical implementation of reinforcement learning (RL) is, in general, challenging \citep{dulac2021challenges}. This is particularly true in real-world settings where, unlike in simulated settings, interactions with the environment are costly to obtain. One promising path toward more sample-efficient learning in real-world situations is to incorporate known structural properties of the underlying Markov decision process (MDP) into the learning algorithm. As elegantly articulated by \cite{mohan2023structure}, structural properties can be considered a type of ``side information” that can be exploited by the RL agent for its benefit. Instantiations of this concept are plentiful and diverse: examples include factored decompositions \citep{kearns1999efficient,boutilier2000stochastic,osband2014near}, latent or contextual MDPs \citep{hallak2015contextual,kwon2021rl,steimle2021multi}, block MDPs \citep{du2019provably}, linear MDPs \citep{jin2020provably}, shape-constrained value and/or policy functions \citep{powell2004learning,kunnumkal2008using,jiang2015approximate}, MDPs adhering to closure under policy improvement \citep{bhandari2019global}, and multi-timescale or hierarchical MDPs \citep{hauskrecht1998hierarchical,chang2003multitime}, to name just a few.

\looseness-1 In this paper, we focus on a class of problems called \emph{weakly coupled MDPs} (WCMDPs) and show how one can leverage their inherent structure through a tailored RL approach. WCMDPs, often studied in the field of operations research, consist of multiple subproblems that are independent from each other except for a coupling constraint on the action space \citep{hawkins2003langrangian}. This type of weakly coupled structure frequently emerges in practice, spanning domains like supply chain management \citep{hawkins2003langrangian}, recommender systems \citep{zhou2023rl}, online advertising \citep{boutilier2016budget}, revenue management \citep{talluri1998analysis}, and stochastic job scheduling \citep{yu2018deadline}.
Such MDPs can quickly become intractable when RL algorithms are applied naively, given that their state and action spaces grow exponentially with the number of subproblems \citep{nadarajah2021self}. 

\looseness-1 One can compute an upper bound on the optimal value of a WCMDP by performing a Lagrangian relaxation on the action space coupling constraints. Importantly, the weakly coupled structure allows the relaxed problem to be \emph{completely decomposed} across the subproblems, which are significantly easier to solve than the full MDP \citep{hawkins2003langrangian,adelman2008relaxations}. Our goal in this paper is to devise a method that can integrate the Lagrangian relaxation upper bounds into the widely adopted value-based RL approaches of Q-learning \citep{watkins1989learning} and deep Q-networks (DQN) \citep{mnih2013playing}. Our proposed method is, to our knowledge, the first to explore the use of Lagrangian relaxations to tackle general WCMDPs in a fully model-free, deep RL setting.

\begin{figure*}[t]
    \centering
\includegraphics[width=\linewidth]{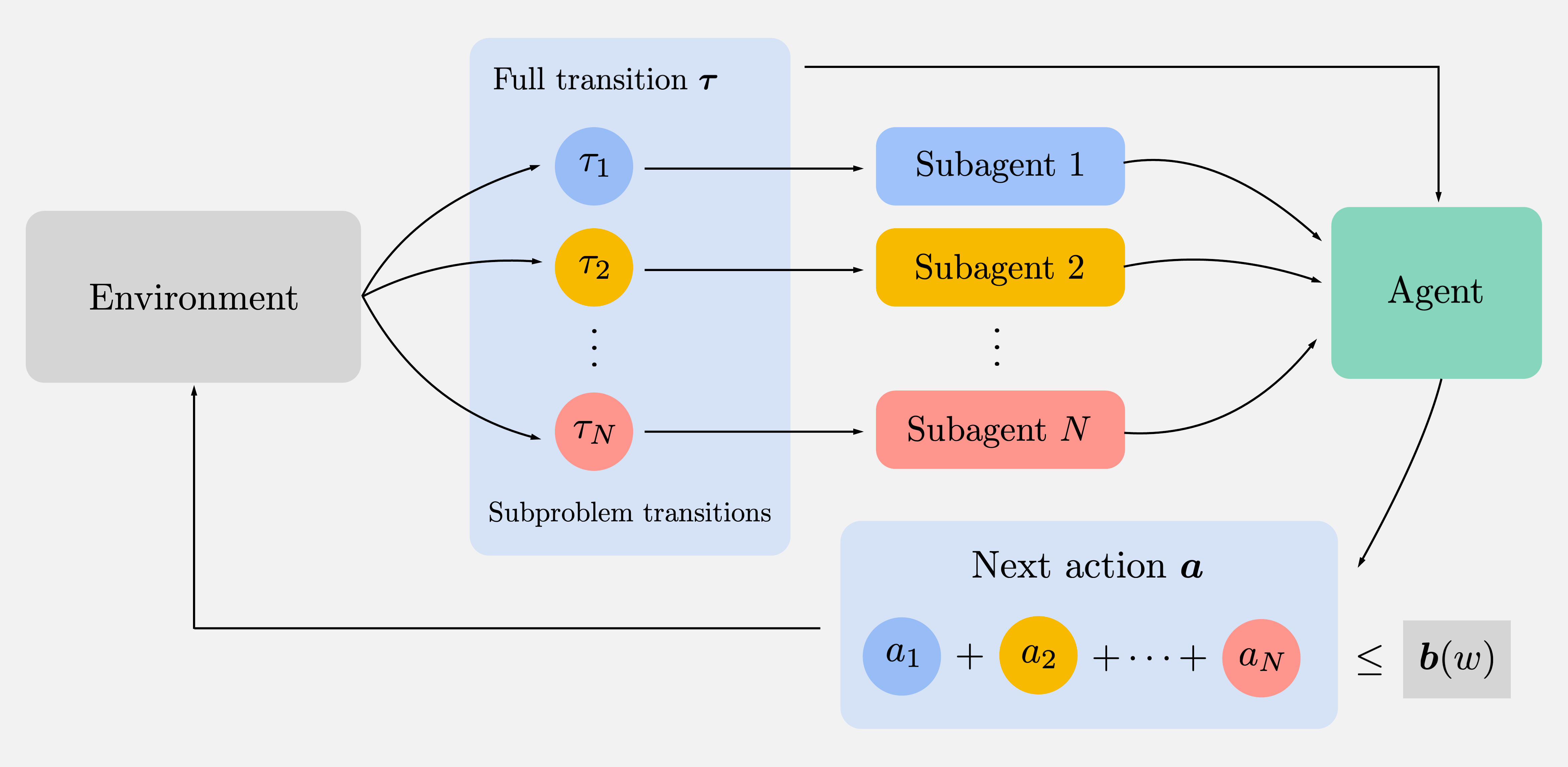}
		\caption{An illustration of our RL approach for WCMDPs. Our approach takes a single ``full'' transition $\tau$ (as collected by a standard RL agent) and decomposes it into subproblem transitions $\tau_i$ that are passed to ``subagents,'' which are powered by a single network and aim to solve the easier subproblems. The results of these subagents are then used collectively to guide the main agent toward the optimal policy, whose actions need to satisfy \emph{linking constraints}. Here, we illustrate the case of a single linking constraint that requires the sum of the actions to be bounded by a right-hand-side quantity $\b(w)$, where $w$ is an exogenous state from the environment.} \label{F:intro_fig}
\end{figure*}

\textbf{Main contributions.} We make the following methodological and empirical contributions.
\begin{enumerate}
    \item First, we propose a novel deep RL algorithm, called \emph{weakly coupled deep Q-networks} (WCDQN), that exploits weakly coupled structure by using a set of \emph{subagents}, each attached to one of the subproblems, whose solutions are combined to help improve the performance of main DQN agent; see Figure \ref{F:intro_fig} for a high-level overview.
    \item Second, we also propose and analyze a tabular version of our algorithm called \emph{weakly coupled Q-learning} (WCQL), which serves to conceptually motivate WCDQN. We show that WCQL converges almost surely to the optimal action-value.
    \item Finally, we conduct numerical experiments on a suite of realistic problems, including electric vehicle charging station optimization, multi-product inventory control, and online stochastic ad matching. The results show that our proposed algorithm outperform baselines by a relatively large margin in settings with as many as 10 subproblems, $3^{10}$ total actions, and a continuous state space.
\end{enumerate}

\section{Related Literature}

\textbf{Weakly Coupled MDPs.} This line of work began with \cite{whittle1988restless} under the name of \emph{restless multi-armed bandits} (RMAB), where there are two actions (``active'' or ``passive'') for each subproblem (also known as ``project'' or ``arm''), under a budget constraint on the number of active arms at any given time.\footnote{We note that WCMDPs should not be confused with \emph{constrained MDPs}, where a budget constraint is imposed on the overall cost of the policy in all periods \citep{altman2021constrained}.} As we will see soon, this is a special case of a WCMDP with two actions per subproblem and a single budget constraint. A popular solution approach to RMABs is the \emph{Whittle index} policy, which was first proposed by \cite{whittle1988restless} and uses the idea of ranking arms by their ``marginal productivity.'' The policy has been extensively studied in the literature from both applied and theoretical perspectives \citep{glazebrook2006some,liu2010indexability,hu2017asymptotically,hsu2018age,meshram2018whittle,zhang2022near}. Whittle conjectured in \cite{whittle1988restless} that the Whittle index policy is asymptotically optimal under a condition called \emph{indexability}; later, \cite{weber1990index} established that asymptotic optimality requires indexability, but also another technical condition, both of which are difficult to verify. As discussed in detail by \cite{zhang2022near}, relying on the Whittle index policy in real-world problems can be problematic due to hard-to-verify technical conditions (and if not met, computational robustness and the heuristic's original intuitive motivation may be lost).

A number of recent papers have considered using RL in the setting of RMABs, but nearly all of them are based on Whittle indices \citep{fu2019towards,nakhleh2021neurwin,killian2021q,killian2021robust,avrachenkov2022whittle,robledo2022qwi,xiong2022learning}, and are thus most useful when the indexability condition can be verified. Exceptions are \cite{killian2021q} and \cite{killian2021robust}, which propose to run RL directly on the Lagrangian relaxation of the true problem to obtain a ``Lagrange policy.'' Our paper is therefore closest in spirit to these two works, but our methods target the optimal value and policy (with the help of Lagrangian relaxation) rather than pursuing the Lagrange policy as the end goal (which does not have optimality guarantees in general). Moreover, compared to the other RL approaches mentioned above, we do not require the indexability condition and our method works for any WCMDP.

Relaxations of WCMDPs can be performed in several different ways, including approximate linear programming (ALP) \citep{adelman2008relaxations,brown2022strength}, network relaxation \citep{nadarajah2021self}, and Lagrangian relaxation \citep{whittle1988restless,talluri1998analysis,hawkins2003langrangian,bertsimas2007learning,adelman2008relaxations,topaloglu2009using,brown2022strength}. Notably, \cite{adelman2008relaxations} provided key results for the ALP and Lagrangian relaxation approaches, and \cite{brown2022strength} gave theoretical justification for the closeness of the bounds obtained by the approximate linear programming and Lagrangian relaxation approaches, an empirical observation made in \cite{adelman2008relaxations}. Our work focuses specifically on the Lagrangian relaxation approach, which relaxes the linking constraints on the action space by introducing a penalty in the objective.

\textbf{DQN and Q-learning.}
\looseness-1 The Q-learning algorithm \cite{watkins1989learning} is perhaps the most popular value-based tabular RL algorithm \citep{jaakkola1994convergence,tsitsiklis1994asynchronous,bertsekas1996neuro}, and the DQN approach of \cite{mnih2013playing} extends the fundamental ideas behind Q-learning to the case where Q-functions are approximated using deep neural networks, famously demonstrated on a set of Atari games. Unfortunately, practical implementation of Q-learning, DQN, and their extensions on real-world problems can be difficult due to the large number of samples required for learning \citep{mohan2023structure}. 

Various papers have attempted to extend and enhance the DQN algorithm. For example, to overcome the over-estimation problem and improve stability, \cite{van2016deep} proposes \emph{double DQN}, which adapts the tabular approach of \emph{double Q-learning} from \cite{hasselt2010double} to the deep RL setting. The main idea is to use a different network for the action selection and evaluation steps. \cite{schaul2015prioritized} modifies the experience replay buffer sampling to prioritize certain tuples, and \cite{wang2016dueling} adds a ``dueling architecture'' to double DQN that combines two components, an estimator for the state value function and an estimator for the state-dependent action advantage function. Other examples include \emph{bootstrapped DQN} \citep{osband2016deep}, \emph{amortized Q-learning} \cite{van2020q}, \emph{distributional RL} \cite{bellemare2023distributional}, and \emph{rainbow DQN} \cite{hessel2018rainbow}.

Our approach, WCDQN, is also an enhancement of DQN, but differ from the above works in that our focus is on modifying DQN to exploit the structure of a class of important problems that are otherwise intractable, while the existing papers focus on improvements made to certain components of the DQN algorithm (e.g., network architecture, experience replay buffer, exploration strategy). In particular, it should be possible to integrate the main ideas of WCDQN into variations of DQN without much additional work.

\textbf{Use of constraints and projections in RL.} \looseness-1 WCDQN relies on constraining the learned Q-function to satisfy a learned upper bound. The work of \cite{he2016learning} uses a similar constrained optimization approach to enforce upper and lower bounds on the optimal action value function in DQN. Their bounds are derived by exploiting multistep returns of a general MDP, while ours are due to dynamically-computed Lagrangian relaxations. \cite{he2016learning} also does not provide any convergence guarantees for their approach. 

\looseness-1 In addition, \cite{el2020lookahead} proposed a convergent variant of Q-learning that leverages upper and lower bounds derived using the information relaxation technique of \cite{brown2010information} to improve performance of tabular Q-learning. Although our work shares the high-level idea of bounding Q-learning iterates, \cite{el2020lookahead} focused on problems with \emph{partially known transition models} (which are necessary for information relaxation) and the approach did not readily extend to the function approximation setting. Besides focusing on a different set of problems (WCMDPs), our proposed approach is model-free and naturally integrates with DQN.

\section{Preliminaries}
In this section, we give some background on WCMDPs, Q-learning, DQN, and the Lagrangian relaxation approach. All proofs throughout the rest of the paper are given in Appendix \ref{Appendix:WCDQNProofs}.

\subsection{Weakly Coupled MDPs}
We study an infinite horizon WCMDP with state space $\mathcal S = \mathcal{X} \times \mathcal W$ and finite action space $\mathcal{A}$, where $\mathcal X$ is the \emph{endogenous} part (i.e., affected by the agent's actions) and $\mathcal W$ is the \emph{exogenous} part (i.e., unaffected by the agent's actions) of the full state space. We use the general setup of WCMDPs from \cite{brown2022strength}.
A WCMDP can be decomposed into $N$ subproblems. The state space of subproblem $i$ is denoted by $\mathcal{S}_i = \mathcal X_i \times \mathcal W$ and the action space is denoted by $\mathcal{A}_i$, such that 
\[
\mathcal{X} = \otimes_{i=1}^N \mathcal{X}_i \quad \text{and} \quad \mathcal{A} = \otimes_{i=1}^N \mathcal{A}_i.
\]
In each period, the decision maker observes an exogenously and independently evolving state $w \in \mathcal W$, along with the endogenous states $\x = (x_1, x_2, \ldots, x_N)$, where $x_i \in \mathcal X_i$ is associated with subproblem $i$. Note that $w$ is shared by all of the subproblems, and this is reflected in the notation we use throughout the paper, where $\s = (\x, w) \in \mathcal S$ represents the full state and $s_i = (x_i, w)$ is the state of subproblem $i$. In addition to the exogenous state $w$ being shared across subproblems, there also exist $L$ \emph{linking} or \emph{coupling constraints} that connect the subproblems: they take the form $\sum_{i=1}^N  \d(s_i, a_i) \le \b(w)$, where $\d(s_i, a_i), \b(w) \in \mathbb R^L$  and $a_i \in \mathcal A_i$ is the component of the action associated with subproblem $i$. The set of feasible actions for state $\mathbf{s}$ is given by
\begin{equation}
\mathcal{A}(\s) = \biggl\{ \a \in \mathcal{A} : \sum_{i=1}^N \d(s_i, a_i) \le \b(w)  \biggr\}.
\label{eq:actionspaceconstraint}
\end{equation}
After observing state $\s = (\x, w)$, the decision maker selects a feasible action $\a \in \mathcal A(\s)$.

The transition probabilities for the endogenous component is denoted $p(\x' \,|\, \x, \a )$ and we assume that transitions are conditionally independent across subproblems: 
\[
p(\x' \,|\, \x, \a )=\Pi_{i=1}^N \, p_i(x'_i\, | \,x_i,a_i),
\]
where $p_i(x'_i\, | \,x_i,a_i)$ are the transition probabilities for subproblem $i$. The exogenous state transitions according to $q(w'\,|\,w)$. Next, let $ r_i(s_i, a_i)$ be the reward of subproblem $i$ and let $\r(\s,\a)=\{r_i(s_i,a_i)\}_{i=1}^N$. The reward of the overall system is additive: $r(\mathbf s, \a)=\sum_{i=1}^N r_i(s_i, a_i)$.

Given a discount factor $\gamma \in [0, 1)$ and a feasible policy $\pi: \mathcal S \rightarrow \mathcal A$ that maps each state $\s$ to a feasible action $\a \in \mathcal A(\s)$, the value (cumulative discounted reward) of following $\pi$ when starting in state $\s$ and taking a first action $\a$ is given by the action-value function $Q^{\pi}(\s, \a) = \E \bigl[\sum_{t=0}^{\infty} \gamma^t r(\s_t,\a_t)\,|\,\pi, \s_0 = \s, \a_0=\a \bigr]$. 
Our goal is to find an optimal policy $\pi^*$, i.e., one that maximizes $V^\pi(\s) = Q^\pi(\s, \pi(\s))$. We let $Q^*(\s, \a) = \max_\pi Q^\pi(\s, \a)$ and $V^*(\s) = \max_{\pi}V^{\pi}(\s)$ be the optimal action-value and value functions, respectively. It is well-known that the optimal policy selects actions in accordance to $\pi^*(\s)=\argmax_{\a} Q^*(\s,\a)$ and that the Bellman recursion holds:
\begin{equation}
\textstyle Q^*(\s,\a) =  r(\s,\a) + \gamma \, \E \Bigl[ \max_{\a' \in \mathcal A(\s')} Q^*(\s',\a') \Bigr],
\label{E:Q*}
\end{equation}
where $\s' = (\x', w')$ is distributed according to $p(\cdot\,|\,\x,\a)$ and $q(\cdot\,|\, w)$.

\subsection{Q-learning and DQN}
The Q-learning algorithm of \cite{watkins1989learning} is a tabular approach that attempts to learn the optimal action-value function $Q^*$ using stochastic approximation on \eqref{E:Q*}. Using a learning rate $\alpha_n$, the update of the approximation $Q_n$ from iteration $n$ to $n+1$ is:
\begin{align*}
 Q_{n+1}(&\s_n,\a_n) =   Q_n(\s_n,\a_n) + \alpha_n(\s_n,\a_n) \bigl[ y_n - Q_n(\s_n, \a_n) \bigr],
\end{align*}
where $y_n = r_n  + \gamma \max_{\a'} Q_n(\s_{n+1},\a')$ is the \emph{target} value, computed using the observed reward $r_n$ at $(\s_n, \a_n)$, the transition to $\s_{n+1}$, and the current value estimate $Q_n$.

The DQN approach of \cite{mnih2013playing} approximates $Q^*$ via a neural network $Q(\s,\a;\theta)$ with network weights $\theta$. The loss function used to learn $\theta$ is directly based on minimizing the discrepancy between the two sides of \eqref{E:Q*}:
\[
l(\theta) = \E_{\s, \a \sim \rho} \Bigl[ \bigl(y - Q(\s, \a; \theta) \bigr)^2 \Bigr],
\]
where $y = r(\s, \a) + \gamma 
\, \E \bigl[ \max_{\a'} Q(\s',\a'; \theta^-) \bigr]$, $\theta^-$ are frozen network weights from a previous iteration, and $\rho$ is a \emph{behavioral distribution} \citep{mnih2013playing}. In practice, we sample experience tuples $(\s_n, \a_n, r_n, \s_{n+1})$ from a replay buffer and perform a stochastic gradient update:
\[
\theta_{n+1} = \theta_n + \alpha_n \bigl[  y_n - Q(\s_n, \a_n; \theta)  \bigr] \nabla_\theta Q(\s_n, \a_n; \theta),
\]
with  $y_n = r_n  + \gamma \max_{\a'} Q(\s_{n+1},\a'; \theta^-)$. Note the resemblance of this update to that of Q-learning.

\subsection{Lagrangian Relaxation}
The Lagrangian relaxation approach decomposes WCMDPs by relaxing the linking constraints to obtain separate, easier-to-solve subproblems \citep{adelman2008relaxations}. 
The main idea is to dualize the linking constraints $\sum_{i=1}^N  \d(s_i, a_i) \le \b(w)$ using a penalty vector $\lambda \in \mathbb{R}^L_+$. The result is an augmented objective consisting of the original objective plus additional terms that penalize constraint violations. The Bellman equation of the relaxed MDP in \eqref{E:Q*} is given by:
\begin{equation}
    Q^{\lambda}(\s,\a) = r(\s,\a) + \lambda^\intercal \biggl[\b(w) - \sum_{i=1}^N  \d(s_i,a_i) \biggr] 
     \textstyle + \gamma \, \E \Bigl[ \max_{\a' \in \mathcal A} Q^{\lambda} (\s',\a') \Bigr].
\label{E:QLag}
\end{equation}
With the linking constraints removed, this relaxed MDP can be decomposed across subproblems, so we are able to define the following recursion for each subproblem $i$:
\begin{equation}
Q_i^{\lambda}(s_i,a_i) = r_i(s_i, a_i) - \lambda^\intercal  \d(s_i,a_i)   + \gamma \, \E \left [\max_{a_i' \in \mathcal A_i}Q_i^{\lambda}(s_i',a_i') \right].
\label{E:Q_i}
\end{equation}
It is well-known from classical results that any penalty vector $\lambda \ge 0$ produces an MDP whose optimal value function is an upper bound on the $V^*(\s)$ \citep{hawkins2003langrangian, adelman2008relaxations}. The upcoming proposition is a small extension of these results to the case of action-value functions, which is necessarily for Q-learning.

\begin{prop}
\label{P:LagDP}
For any $\lambda \ge 0$ and $\s \in \mathcal S$, it holds that $Q^*(\s,\a) \le Q^{\lambda}(\s,\a)$ for any $\a \in \mathcal A(\s)$. In addition, the Lagrangian action-value function of \eqref{E:QLag} satisfies 
\begin{equation}
 Q^{\lambda} (\s,\a) = \lambda^\intercal \B(w) + \sum_{i=1}^N Q_i^{\lambda}(s_i,a_i)
 \label{eq:Qlambda}
\end{equation}
where $Q_i^{\lambda}(s_i,a_i)$ is as defined in \eqref{E:Q_i} and 
 $\B(w)$ satisfies the recursion  
\begin{equation}
\B(w) = \b(w) + \gamma \, \E \bigl[ \B(w') \bigr],
\end{equation} \label{P:Decomposition}
with the exogenous next state $w'$ is distributed according to $q(\cdot\,|\, w)$.
\end{prop}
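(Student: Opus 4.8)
The plan is to prove the two assertions separately: the inequality $Q^*(\s,\a)\le Q^\lambda(\s,\a)$ I would get from a policy-comparison argument, and the decomposition \eqref{eq:Qlambda} I would get by writing down an explicit candidate solution to the relaxed Bellman equation \eqref{E:QLag} and appealing to uniqueness of its fixed point.

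For the bound, fix $\lambda\ge 0$, a state $\s=(\x,w)$, and a feasible action $\a\in\mathcal A(\s)$, and let $\pi^*$ be an optimal policy for the original WCMDP. The key observation is that $\pi^*$ is feasible, so along every trajectory it generates when started from $\a$ we have $\sum_i\d(s_{i,t},a_{i,t})\le\b(w_t)$ at every step, hence the penalty term $\lambda^\intercal[\b(w_t)-\sum_i\d(s_{i,t},a_{i,t})]$ is nonnegative throughout. Therefore the action-value of $\pi^*$ evaluated in the relaxed MDP (same dynamics, reward augmented by the penalty) equals $Q^{\pi^*}(\s,\a)$ plus a nonnegative discounted sum of penalties, so it is at least $Q^{\pi^*}(\s,\a)=Q^*(\s,\a)$. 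Since \eqref{E:QLag} is the Bellman optimality equation of that relaxed MDP — whose action set $\mathcal A$ only enlarges the feasible set $\mathcal A(\cdot)$ — its unique bounded solution $Q^\lambda$ dominates the action-value of any stationary policy there, in particular that of $\pi^*$; chaining the two inequalities gives the claim.

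For the decomposition, I would first record that $\B(\cdot)$ is well defined as the unique bounded solution of $\B(w)=\b(w)+\gamma\,\E[\B(w')]$ (that map is a $\gamma$-contraction on bounded functions of $w$), and each $Q_i^\lambda$ is well defined as the optimal action-value function of subproblem $i$'s relaxed MDP \eqref{E:Q_i}. Then I would set $\tilde Q(\s,\a):=\lambda^\intercal\B(w)+\sum_{i=1}^N Q_i^\lambda(s_i,a_i)$ and verify that it satisfies \eqref{E:QLag}. Substituting $\tilde Q$ into the right-hand side of \eqref{E:QLag}, using $r(\s,\a)=\sum_i r_i(s_i,a_i)$ and $\lambda^\intercal[\b(w)-\sum_i\d(s_i,a_i)]=\lambda^\intercal\b(w)-\sum_i\lambda^\intercal\d(s_i,a_i)$, the remaining work is to evaluate $\gamma\,\E[\max_{\a'\in\mathcal A}\tilde Q(\s',\a')]$: because $\mathcal A=\otimes_i\mathcal A_i$ and $\tilde Q$ is additive across subproblems, the maximum splits into $\lambda^\intercal\B(w')+\sum_i\max_{a_i'\in\mathcal A_i}Q_i^\lambda(s_i',a_i')$, and because the endogenous transitions are conditionally independent across subproblems while $w'$ evolves exogenously, the expectation factors so that each summand is exactly the expectation appearing in \eqref{E:Q_i} (and in the $\B$-recursion). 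Regrouping the $\B$-terms as $\lambda^\intercal\b(w)+\gamma\lambda^\intercal\E[\B(w')]=\lambda^\intercal\B(w)$ and the subproblem terms via \eqref{E:Q_i} collapses the right-hand side back to $\tilde Q(\s,\a)$; uniqueness of the fixed point of \eqref{E:QLag} then forces $Q^\lambda=\tilde Q$, which is \eqref{eq:Qlambda}.

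I expect the main obstacle to be purely bookkeeping in this last step: one must be careful that the maximum over the product action set $\mathcal A$ genuinely decomposes (this uses the additive form of the ansatz, not merely the product structure of $\mathcal A$), and that conditional independence of the subproblem transitions together with the shared exogenous state $w$ lets the expectation split into pieces that line up precisely with the expectations in \eqref{E:Q_i} and the recursion for $\B$. Everything else — existence and uniqueness of the relevant fixed points, and the sign argument underlying the bound — is standard.
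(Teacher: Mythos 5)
Your proof is correct, but it takes a genuinely different route from the paper's on both halves. For weak duality, the paper runs value iteration on the original and relaxed Bellman equations in parallel and shows by induction that $Q_t^{\lambda}(\s,\a)\ge Q_t^*(\s,\a)$ for every feasible $\a$ (using nonnegativity of the penalty on $\mathcal A(\s)$ and the enlargement of $\mathcal A(\s')$ to $\mathcal A$ inside the max), then passes to the limit; you instead evaluate the feasible optimal policy $\pi^*$ inside the relaxed MDP and compare policy values, which is a valid and arguably more conceptual weak-duality argument, at the price of invoking existence of an optimal stationary policy and the standard fact that the relaxed optimal $Q^{\lambda}$ dominates every policy's value in the relaxed MDP. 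For the decomposition, the paper again argues along value iteration, showing inductively that each iterate decomposes as $Q_t^{\lambda}(\s,\a)=\lambda^\intercal\B_t(w)+\sum_{i=1}^N Q_{t,i}^{\lambda}(s_i,a_i)$ and taking limits, whereas you verify directly that the additive ansatz is a fixed point of the relaxed Bellman operator and appeal to uniqueness via the $\gamma$-contraction property. Both versions hinge on exactly the two facts you flag: the max over the product set $\mathcal A$ of an additively separable function splits into subproblem maxima, and the transition structure (conditionally independent endogenous transitions, exogenous $w$) makes each subproblem expectation line up with \eqref{E:Q_i} and the $\B$-recursion. Your fixed-point verification is slightly shorter and cleanly isolates where uniqueness is used; the paper's induction keeps everything inside the value-iteration convergence framework it already needs for the first part and so avoids stating the contraction/uniqueness step separately. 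No gaps.
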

The first part of the proposition is often referred to as \emph{weak duality} and the second part shows how the Lagrangian relaxation can be solved by decomposing it across subproblems, dramatically reducing the computational burden. The tightest upper bound is the solution of the \emph{Lagrangian dual problem}, $Q^{\lambda^*}(\s, \a) =  \min_{\lambda \ge 0} Q^{\lambda}(\s,\a)$,
where $\lambda^*$ is minimizer.

\section{Weakly Coupled Q-learning}

\looseness-1 In this section, we introduce the tabular version of our RL algorithm, called \emph{weakly coupled Q-learning} (WCQL), which will illustrate the main concepts of the deep RL version, WCDQN. 

\subsection{Setup}

We first state an assumption on when the linking constraint (\ref{eq:actionspaceconstraint}), which determines the feasible actions given a state, is observed.

\begin{assumption}[Linking constraint observability; general setting]
\label{assumption:constraint_observability}
Suppose that upon landing in a state $\s = (\x, w)$, the agent observes the possible constraint left-hand-side values $\d(s_i, \cdot)$ for every $i$, along with the constraint right-hand-side $\b(w)\in \mathbb R^L$.
\end{assumption}
Under Assumption \ref{assumption:constraint_observability}, the agent is able to determine the feasible action set $\mathcal A(\s)$ upon landing in state $\s$. Accordingly, it can always take a feasible action. In many cases, it is known in advance that the feasible action set is of the \emph{multi-action RMAB} form: there is a single linking constraint (i.e., $L=1$) and the left-hand-side is the sum of subproblem actions (i.e., $\d(s_i, a_i) = a_i$). In that case, Assumption \ref{assumption:constraint_observability} reduces to the following simpler statement, which we state for completeness.

\begin{assumptionp}{1$'$}[Linking constraint observability; multi-action RMAB setting]
\label{assumption:constraint_observability_rmab}
Suppose that we are in a multi-action RMAB setting. When the agent lands in a state $\s = (\x, w)$, it observes the constraint right-hand-side $\b(w) \in \mathbb R$.
\end{assumptionp}

In the numerical example applications of Section \ref{sec:numerical}, for illustrative simplicity, we choose to focus on single-constraint settings where Assumption \ref{assumption:constraint_observability_rmab} is applicable. Note that the ``difficulty'' of WCMDPs is largely determined by the number of subproblems and the size of the feasible set compared to the full action space, not necessarily by the \emph{number} of linking constraints. In each of our example applications, Assumption \ref{assumption:constraint_observability} naturally holds: for example, in the EV charging problem, there are a limited number of available charging stations (which is always observable).

An important part of WCQL is to track an estimate of $Q^{\lambda^*}(\s,\a)$, the result of the Lagrangian dual problem. To approximate this value, we replace the minimization over all $\lambda \ge 0$ by optimization over a finite set of possible multipliers $\Lambda$, which we consider as an input to our algorithm. In practice, we find that it is most straightforward to use $\lambda = \lambda' \mathbf{1}$, where $\mathbf{1} \in \mathbb R^L$ is the all ones vector and $\lambda' \in \mathbb R$, but from the algorithm's point of view, any set $\Lambda$ of nonnegative multipliers will do.

We denote an experience tuple for the entire WCMDP by $\boldsymbol{\tau}=(\s,\a,\r, \b,\s')$. Similarly, we let $\tau_i = (s_i, a_i, r_i, s_i')$ be the experience relevant to subproblem $i$, as described in \eqref{E:Q_i}. Note that $\b$ is excluded from $\tau_i$ because it does not enter subproblem Bellman recursion. 

\subsection{Algorithm Description}
The WCQL algorithm can be decomposed into three main steps. 

\textbf{Subproblems and subagents.}
First, for each subproblem $i \in \{1, 2, \ldots, N\}$ and every ${\lambda}\in{\Lambda}$, we attempt to learn an approximation of  $Q^{\lambda}_{i}(s_{i},a_{i})$ from \eqref{E:Q_i}, which are the $Q$-values of the unconstrained subproblem associated with $\lambda$. We do this by running an instance of Q-learning with learning rate $\beta_n$. Letting $Q^{\lambda}_{i,n}$ be the estimate at iteration $n$, the update is given by:
\begin{equation}
\begin{aligned}
	 Q^{\lambda}_{i,n+1}(s_{i},a_{i}) = Q^{\lambda}_{i,n}(s_{i},a_{i})+ \beta_{n}(s_i,a_i) \bigl[y_{i,n}^{{\lambda}} - Q^{\lambda}_{i,n}(s_{i},a_{i}) \bigr ],
\end{aligned}
\label{eq:wcql_sub_update}
\end{equation}
where the target value is defined as $y_{i,n}^{{\lambda}} = r_{i}(s_{i},a_{i}) - \lambda^\intercal  \d(s_i,a_i) + \gamma \max_{a'_i} Q^{\lambda}_{i,n}(s'_{i},a'_{i})$.

Note that although we are running several Q-learning instances, they all make use of a \emph{common experience tuple} $\boldsymbol{\tau}$ split across subproblems, where subproblem $i$ receives the portion $\tau_i$. We remind the reader that each subproblem is dramatically simpler than the full MDP, since it operates on smaller state and action spaces ($\mathcal S_i$ and $\mathcal A_i$) instead of $\mathcal S$ and $\mathcal A$. 

We refer to the subproblem Q-learning instances as \emph{subagents}. Therefore, each subagent is associated with a subproblem $i$ and a penalty $\lambda \in \Lambda$ and aims to learn $Q_{i}^{\lambda}$.

\textbf{Learning the Lagrangian bounds.} Next, at the level of the ``main'' agent, we combine the approximations $Q^{\lambda}_{i,n+1}$ learned by the subagents to form an estimate of the Lagrangian action-value function $Q^{\lambda}$, as defined in \eqref{eq:Qlambda}. To do so, we first estimate the quantity $\B(w)$ of Proposition \ref{P:LagDP}. This can be done using a stochastic approximation step with a learning rate $\eta_n$, as follows:
\begin{equation}
\begin{aligned}
    \B_{n+1}&(w) = \B_n(w) +  \eta_n(w) \bigl[\b(w)  + \gamma \B_n(w')  - \B_n(w)\bigr],
\end{aligned}
\label{E:B}
\end{equation}
where we recall that $w$ and $w'$ come from the experience tuple $\boldsymbol{\tau}$, embedded within $\s$ and $\s'$. Now, using Proposition \ref{P:LagDP},
we approximate $Q^{\lambda}(\s,\a)$ using
\begin{equation}
 Q^{\lambda}_{n+1}(\s,\a) =  \lambda^\intercal \B_{n+1}(w) + \sum_{i=1}^N Q^{\lambda}_{i,n+1}(s_{i},a_{i}).
 \label{E:lagQlam}
\end{equation}
Finally, we estimate an upper bound on $Q^*$ by taking the minimum over $\Lambda$:
\begin{equation}
   Q^{\lambda^*}_{n+1}(\s,\a) = \min_{\lambda \in \Lambda} Q^{\lambda}_{n+1}(\s,\a). 
   \label{E:Qlagdual}
\end{equation}

\textbf{Q-learning guided by Lagrangian bounds.}
\looseness-1 We would now like to make use of the learned upper bound $Q^{\lambda^*}_{n+1}(\s,\a)$ when performing Q-learning on the full problem. Denote the WCQL estimate of $Q^*$ at iteration $n$ by $Q'_n$. We first make a standard update towards an intermediate value $Q_{n+1}$ using learning rate $\alpha_n$:
\begin{equation}
\begin{aligned}
Q_{n+1}(\s,\a) &= Q'_n(\s,\a)  + \alpha_n(\s,\a) \bigl[ y_n- Q'_n(\s, \a) \bigr ].
\end{aligned}
\label{Qupdate}
\end{equation}
where $y_n = r(\s,\a)  + \gamma \max_{\a' \in \mathcal A(\s')} Q'_n(\s',\a')$. To incorporate the bounds that we previously estimated, we then project $Q_{n+1}(\s,\a)$ to satisfy the estimated upper bound: 
\begin{equation}
Q'_{n+1}(\s,\a) =  Q^{\lambda^*}_{n+1}(\s,\a) \wedge Q_{n+1}(\s,\a),
  \label{Q'update}
\end{equation}
where $a \wedge b =  \min \{a, b \}$. 
The agent now takes an action in the environment using a behavioral policy, such as the $\epsilon$-greedy policy on $Q'_{n+1}(\s,\a)$.

The motivation behind this projection is as follows: since the subproblems are significantly smaller in terms of state and action spaces compared to the main problem, the subagents are expected to quickly converge. As a result, our upper bound estimates will get better, improving the the action-value estimate of the main Q-learning agent through the projection step. In addition, WCQL can enable a sort of ``generalization'' to unseen states by leveraging the weakly-coupled structure. The following example illustrates this piece of intuition.

\begin{example}
Suppose a WCMDP has $N=3$ subproblems with $\mathcal S_i = \{1, 2, 3\}$ and $\mathcal A_i = \{1, 2\}$ for each $i$, leading to $3^3 \cdot 2^3 = 216$ total state action pairs. For the sake of illustration, suppose that the agent has visited states $\s = (1, 1, 1)$, $\s=(2,2,2)$, and $\s=(3,3,3)$ and both actions from each of these states. This means that from the perspective of every subproblem $i$, the agent has visited all state-action pairs in $\mathcal S_i \times \mathcal A_i$, which is enough information to produce an estimate of $Q^\lambda_i(s_i, a_i)$ for all $(s_i, a_i)$ and, interestingly, an estimate of $Q^{\lambda^*}(\s,\a)$ for \emph{every} $(\s, \a)$, despite having visited only a small fraction ($6/216$) of the possible state-action pairs. This allows the main Q-learning agent to make use of upper bound information at every state-action pair via the projection step (\ref{Q'update}). The main intuition is that these upper bound values are likely to be more sensible than a randomly initialized value, and therefore, can aid learning.
\end{example}

\looseness-1 The above example is certainly contrived, but hopefully illustrates the benefits of decomposition and subsequent projection. We note that, especially in settings where the limiting factor is the ability to collect enough experience, one can trade-off extra computation to derive these bounds and improve RL performance without the need to collect additional experience. 
The full pseudo-code of the WCQL algorithm is available in Appendix \ref{Appendix:WCQL}.

\subsection{Convergence Analysis}
In this section, we show that WCQL converges to $Q^*$ with probability one. First, we state a standard assumption on learning rates and state visitation.
\begin{assumption}
We assume the following: (i) $\sum_{n=0}^{\infty} \alpha_n(\s,\a) = \infty$, $\sum_{n=0}^{\infty} \alpha^2_n(\s,\a) < \infty$ for all $(\s, \a) \in \mathcal S \times \mathcal A$; (ii) analogous conditions to (i) hold for $\{ \beta_n(s_i, a_i) \}_{n}$ and $\{ \eta_n(w) \}_{n}$, and (iii) the behavioral policy is such that all state-action pairs $(\s, \a)$ are visited infinitely often $w.p.1$.

\label{Assumption}
\end{assumption}

\begin{thm}
Under Assumptions \ref{assumption:constraint_observability} and \ref{Assumption}, the following statements hold with probability one.
\begin{enumerate}[label={(\roman*)}]
    \item For each $i$ and $\lambda \in \Lambda$, $ \lim_{n \rightarrow \infty} Q^{\lambda}_{i,n}(s_i,a_i) = Q_{i}^{\lambda}(s_i,a_i)$ for all $(s_i, a_i) \in \mathcal S_i \times \mathcal A_i$.    \label{T:1}
    \item For each $\lambda \in \Lambda$, $\lim_{n \rightarrow \infty} Q^{\lambda}_n(\s,\a) \ge Q^*(\s,\a)$ for all $(\s, \a) \in \mathcal S \times \mathcal A$.
    \label{T:2}
    \item $\lim_{n \rightarrow \infty} Q'_n(\s,\a) = Q^*(\s, \a)$ for all $(\s, \a) \in \mathcal S \times \mathcal A$.
    \label{T:3}
\end{enumerate}
\label{T:ConvWCQL}
\end{thm}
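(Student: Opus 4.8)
The plan is to prove the three statements in order, since each builds on the previous one. Part \ref{T:1} is the most routine: the subagent update \eqref{eq:wcql_sub_update} is exactly classical Q-learning applied to the unconstrained subproblem MDP with reward $r_i(s_i,a_i) - \lambda^\intercal \d(s_i,a_i)$ and transition kernel $p_i(\cdot\,|\,x_i,a_i)$ (together with $q(\cdot\,|\,w)$). Under Assumption \ref{Assumption}(ii) (the Robbins--Monro conditions on $\beta_n$) and Assumption \ref{Assumption}(iii) (every full state-action pair, hence every subproblem pair $(s_i,a_i)$, is visited infinitely often), the standard convergence theorem for Q-learning \citep{jaakkola1994convergence, tsitsiklis1994asynchronous} applies directly and yields $Q^{\lambda}_{i,n} \to Q_i^{\lambda}$ w.p.1. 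One should note the bounded-reward requirement is inherited from the finiteness of the spaces and the fixed finite set $\Lambda$.

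For part \ref{T:2}, I would first handle the $\B_n(w)$ recursion \eqref{E:B}: this is itself a linear stochastic approximation / TD(0)-type fixed-point iteration for the contraction map $\B \mapsto \b(w) + \gamma\,\E[\B(w')]$, so under Assumption \ref{Assumption}(ii) for $\eta_n$ it converges w.p.1 to the unique solution $\B(w)$ of Proposition \ref{P:LagDP}. Combining this with part \ref{T:1} and the definition \eqref{E:lagQlam}, $Q^{\lambda}_{n}(\s,\a) = \lambda^\intercal \B_n(w) + \sum_i Q^{\lambda}_{i,n}(s_i,a_i) \to \lambda^\intercal \B(w) + \sum_i Q_i^{\lambda}(s_i,a_i) = Q^{\lambda}(\s,\a)$ by Proposition \ref{P:LagDP}. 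Then weak duality (first part of Proposition \ref{P:LagDP}) gives $Q^{\lambda}(\s,\a) \ge Q^*(\s,\a)$ for all $\a \in \mathcal A(\s)$; for infeasible $\a \notin \mathcal A(\s)$ the claim in \ref{T:2} still needs a word --- here $Q^*(\s,\a)$ is defined via the unconstrained first action but feasible thereafter, and one checks $Q^{\lambda}$ dominates it by the same penalty argument since the penalty term $\lambda^\intercal[\b(w)-\sum_i\d(s_i,a_i)]$ is nonpositive exactly when infeasible, which is the direction that still makes $Q^\lambda$ an over-estimate after one Bellman step. Taking limits gives $\lim_n Q^{\lambda}_n(\s,\a) \ge Q^*(\s,\a)$.

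Part \ref{T:3} is the crux and the main obstacle. The coupled iteration \eqref{Qupdate}--\eqref{Q'update} is a Q-learning update followed by a projection onto the (moving) upper bound $Q^{\lambda^*}_{n+1} = \min_{\lambda\in\Lambda} Q^\lambda_{n+1}$, and by part \ref{T:2} this bound converges to $\min_{\lambda\in\Lambda} Q^\lambda(\s,\a) \ge Q^*(\s,\a)$. The natural tool is the asynchronous stochastic approximation framework for contraction mappings (the sandwich/monotonicity argument of \citealp{el2020lookahead}, adapted from \citealp{tsitsiklis1994asynchronous, jaakkola1994convergence}): one shows $Q'_n$ is sandwiched between a lower process and an upper process, both converging to $Q^*$. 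The lower bound requires showing the projection never cuts below $Q^*$ asymptotically --- this uses that the limiting bound dominates $Q^*$, but care is needed because at finite $n$ the estimated bound $Q^{\lambda^*}_{n+1}$ can dip below $Q^*$; one must argue that the error $\epsilon_n := (Q^* - Q^{\lambda^*}_{n+1})^+ \to 0$ w.p.1 (which follows from \ref{T:2}) and feed this vanishing perturbation into the stochastic approximation as an additional asymptotically negligible noise term. The upper bound is easier: since $Q'_{n+1} \le Q_{n+1}$ always, $Q'_n$ is dominated by the ordinary Q-learning iterates (with the same noise), which converge to $Q^*$; alternatively one runs the standard argument that the ``max-based'' Bellman operator composed with the projection is still a sup-norm contraction with fixed point $Q^*$ (using $Q^* \le \min_\lambda Q^\lambda$ so the projection is inactive at the fixed point). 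I would carry this out by invoking a general asynchronous-SA lemma (stated in the appendix) of the form: if $X_{n+1} = (1-\alpha_n)X_n + \alpha_n(H X_n + \text{noise} + \zeta_n)$ with $H$ a weighted-sup-norm contraction, the usual step-size and noise conditions hold, and $\zeta_n \to 0$ w.p.1, then $X_n \to X^*$; here the projection contributes the vanishing term $\zeta_n$ and the unprojected part supplies $H$. The delicate point to get right is that the projection operator $x \mapsto U_n \wedge x$ with $U_n \downarrow U \ge Q^*$ composes with the Bellman contraction without destroying the contraction property in the limit --- establishing this cleanly (ideally via the two-sided sandwich rather than a direct contraction argument on the composed operator) is where the real work lies.
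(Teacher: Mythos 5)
Your proposal matches the paper's proof in all essentials: parts (i) and (ii) are handled exactly as you describe (standard Q-learning convergence for the subagents, stochastic-approximation convergence of $\B_n(w)$, then weak duality from Proposition \ref{P:LagDP}), and for part (iii) the paper carries out precisely the two-sided sandwich you say you would prefer --- the Bertsekas--Tsitsiklis level-induction with thresholds $D_{k+1}=(\gamma+\epsilon)D_k$, an accumulated-noise process $W_{n,n_k}$, an auxiliary sequence $Y_n \downarrow \gamma D_k$, and the margin from part (ii) ($Q^{\lambda^*}_n \ge Q^* - \eta$ eventually) guaranteeing the truncation never cuts the iterate below the lower envelope. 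One caution on your alternative route: the generic lemma with a vanishing perturbation $\zeta_n$ inside the update does not apply as literally stated, because the projection $Q'_{n+1}=Q^{\lambda^*}_{n+1}\wedge Q_{n+1}$ is a hard truncation of the iterate rather than an $\alpha_n$-scaled term, so the correction $(Q_{n+1}-Q^{\lambda^*}_{n+1})_+/\alpha_n$ need not vanish; the sandwich formulation (which keeps the $\min$ explicit, as in the paper's lemma) is the clean way around this.
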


Theorem \ref{T:ConvWCQL} ensures that each subagent's value functions converge to the subproblem optimal value. Furthermore, it shows that asymptotically, the Lagrangian action-value function given by \eqref{E:lagQlam} will be an upper bound on the optimal action-value function $Q^*$ of the full problem and that our algorithm will converge to $Q^*$.

\section{Weakly Coupled DQN}
In this section, we propose our main algorithm \emph{weakly coupled DQN} (WCDQN), which integrates the main idea of WCQL into a function approximation setting. WCDQN guides DQN using Lagrangian relaxation bounds, implemented using a constrained optimization approach.

\textbf{Networks.} Analogous to WCQL, WCDQN has a main network $Q'(\s, \a ; \theta)$ that learns the action value of the full problem. In addition to the main network, WCDQN uses a subagent network $Q^{\lambda}_i(s_i, a_i; \theta_U)$ network to learn the subproblem action-value functions $Q_i^{\lambda}$. As in standard DQN, we also have $\theta^-$ and $\theta_U^-$, which are versions of $\theta$ and $\theta_U$ frozen from a previous iteration and used for computing target values \citep{mnih2013playing}. The inputs to this network are $(i, \lambda, s_i, a_i)$, meaning that we can use a single network to learn the action-value function for \emph{all} subproblems and $\lambda \in \Lambda$ simultaneously. 
The Lagrangian upper bound and the best upper bound estimates are: 
\begin{equation}
Q^{\lambda}(\s,\a; \theta_U^-) =  \lambda^\intercal \B(w) + \sum_{i=1}^N Q_{i}^{\lambda}(s_{i},a_{i}; \theta_U^-) \;\; \textnormal{and} \;\; Q^{\lambda^*}(\s,\a; \theta_U^-) = \min_{\lambda \in \Lambda} Q^{\lambda}(\s,\a;  \theta_U^-).
\label{E:Qlam}
\end{equation}

\textbf{Loss functions.} Before diving into the training process, we describe the loss functions used to train each network, as they are instructive toward understanding the main idea behind WCDQN (and how it differs from standard DQN). Consider a behavioral distribution $\rho$ for state-action pairs and a distribution $\mu$ over the multipliers $\Lambda$. 
\begin{equation}
\begin{aligned}l_U&(\theta_U) = \textstyle \E_{\s, \a \sim \rho, \lambda \sim \mu} \Bigl[ \sum_{i=1}^N \bigl(y_i^{\lambda} - Q^{\lambda}_i(s_i, a_i; \theta_U) \bigr)^2 \Bigr],
\end{aligned}
\label{E:QuMSEupdate}
\end{equation}
where the (ideal) target value is
\begin{equation}
\begin{aligned}
y^{\lambda}_{i} =  r_{i}(s_i, &a_i) - \lambda a_{i} + \gamma \, \textstyle \E \bigl[ \max_{a_i' \in \mathcal A_i} Q^{\lambda}_i( s_{i}', a_i'; \theta^-_{U}) \bigr].
\end{aligned}
\label{E:subproblem_target}
\end{equation}
For the main agent, we propose a loss function that adds a soft penalty for violating the upper bound: %
\begin{equation}
\begin{aligned}
\label{E:WCDQN_loss}
l(\theta) =  \E_{\s, \a \sim \rho, \lambda \sim \mu}  &\Bigl[
\bigl(y - Q'(\s,\a; \theta) \bigr)^2 
 + \kappa_U \bigl(Q'(\s,\a; \theta) - y_U \bigr)_+^2 \Bigr],
\end{aligned}
\end{equation}
where $(\cdot)_+=\max(\cdot,0)$, $\kappa_U$ is a coefficient for the soft penalty, and
\begin{align}
y &= r(\s, \a) + \gamma  \E \bigl[ \textstyle \max_{\a' \in \mathcal A(\s')} Q'(\s',\a'; \theta^-) \bigr],\label{E:main_target}\\
y_U &= r(\s, \a) +  \gamma \E \Bigl[ \textstyle \max_{\a' \in \mathcal A} Q^{\lambda^*}(\s',\a'; \theta_U^-) \Bigr].\label{E:ub_target}
\end{align}
The penalty encourages the network to satisfy the bounds obtained from the Lagrangian relaxation.

\textbf{Training process.} The training process resembles DQN, with a few modifications. At any iteration, we first take an action using an $\epsilon$-greedy policy using the main network over the feasible actions, store the obtained transition experience $\boldsymbol{\tau}$ 
in the buffer, and update the estimate of $\B(w)$ following \eqref{E:B}.\footnote{Here we use a tabular representation for $\B(w)$ since our example applications do not necessarily have a large exogenous space $\mathcal W$. When required, WCDQN can be extended to use function approximation (i.e., neural networks) to learn $\B(w)$.}
Each network is then updated by taking a stochastic gradient descent step on its associated loss function, where the expectations are approximated by sampling minibatches of experience tuples $\boldsymbol{\tau}$ and $\lambda$.
The penalty coefficient $\kappa_U$ can either be held constant to a positive value or annealed using a schedule throughout the training. The full details are shown in Algorithm \ref{A:WCDQN} and some further details are given in Appendix \ref{Appendix:WCDQN_alg}.

\begin{algorithm}
\caption{Weekly Coupled DQN}
\begin{algorithmic}[1]
\State \textbf{Input}: Lagrange multiplier set $\Lambda$ and a distribution $\mu$ over $\Lambda$, penalty coefficient $\kappa_U$, initialized replay buffer $\mathcal D$, target network update frequency $C_\textnormal{target}$, initial state distribution $S_0$.
\State Initialize main Q-network $Q'(\cdot, \cdot\,; \theta)$ and subagent network $Q_i^\lambda(\cdot, \cdot\,; \theta_U)$
\State Initialize target network weights $\theta^- = \theta$ and $\theta_U^- = \theta_U$.
\For{$n = 0, 1, 2, \ldots$}
\State Take an $\epsilon$-greedy behavioral action $\a_n$ with respect to the main network $Q'(\s_n, \a; \theta)$. 
\State Store the observed transition $\boldsymbol{\tau}_n$ into the replay buffer $\mathcal D$.
\State {\color{blue}{\tt // Update subagents and combine results to estimate upper bound}}
\State Sample a minibatch of transitions $\boldsymbol{\tau}$ from $\mathcal D$ along with a sample of $\lambda$ from $\mu$.
\For{$i=1,2,\ldots,N$}
\State Compute targets $y_{i}^{\lambda}$ using \eqref{E:subproblem_target} and take an optimization step on subagent loss \eqref{E:QuMSEupdate}.
\EndFor
\State Update right-hand-side estimate $\B_{n+1}(w_n)$ according to \eqref{E:B}.
\State Using \eqref{E:Qlam}, combine subproblems to obtain Lagrangian upper bound $Q^{\lambda^*}(\s,\a;\theta_U^-)$.
\State {\color{blue}{\tt // Main agent update with upper bound penalty loss}}
\State Compute $y$ and $y_U$ using \eqref{E:main_target} and \eqref{E:ub_target}, then take an optimization step on the main loss \eqref{E:WCDQN_loss}.
\State Update $\theta^- = \theta$ and $\theta_U^- = \theta_U$ every $C_\textnormal{target}$ steps.
\EndFor
\end{algorithmic}
\label{A:WCDQN}
\end{algorithm}

\section{Numerical Experiments}
\label{sec:numerical}
In this section, we evaluate our algorithms on three different WCMDPs. First, we evaluate WCQL on an electric vehicle (EV) deadline scheduling problem with multiple charging spots and compare its performance with several other tabular algorithms: Q-learning (QL), Double Q-learning (Double-QL) \citep{hasselt2010double}, speedy Q-learning (SQL) \citep{azar2011speedy}, bias-corrected Q-learning (BCQL) \citep{lee2019bias}, and Lagrange policy Q-learning (Lagrangian QL) \citep{killian2021q}.
We then evaluate WCDQN on two problems, multi-product inventory control and online stochastic ad matching, and compare against standard DQN, Double-DQN, and the optimality-tightening DQN (OTDQN) algorithm\footnote{We include OTDQN as a baseline because it also makes use of constrained optimization during training.} of \citet{he2016learning} as baselines. Further details on environment and algorithmic parameters are in Appendix \ref{Appendix:WCDQN-Numerical-Exp-Details}.

\looseness-1 \textbf{EV charging deadline scheduling \citep{yu2018deadline}.}
In this problem, a decision maker is responsible for charging electric vehicles (EV) at a charging service center that consists of $N=3$ charging spots. An EV enters the system when a charging spot is available and announces the amount of electricity it needs to be charged, denoted $B_t$, along with the time that it will leave the system, denoted $D_t$. The decision maker also faces exogenous, random Markovian processing costs $c_t$. %
At each period, the action is to decide which EVs to charge in accordance with the period's capacity constraint. For each unit of power provided to an EV, the service center receives a reward $1 - c_t$. However, if the EV leaves the system with an unfulfilled charge, a penalty is assessed. The goal is to maximize the revenue minus penalty costs.

\textbf{Multi-product inventory control with an exogenous production rate \citep{hodge2011dynamic}.} Consider the problem of resource allocation for a facility that manufactures $N=10$ products. Each product $i$ has an independent exogenous demand given by $D_i$, $i=1,\ldots,N$. To meet these demands, the products are made to stock. 
Limited storage $R_i$ is available for each product, and holding a unit of inventory per period incurs a cost $h_i$.
Unmet demand is backordered at a cost $b_i$ if the number of backorders is less than the maximum number of allowable backorders $M_i$. Otherwise, it is lost with a penalty cost $l_i$. 
The DM needs to allocate a resource level $a_i \in \{0,1,\ldots,U\}$ for product $i$ from a shared finite resource quantity $U$ in response to changes in the stock  level of each product, denoted by $x_i \in X_i=\{ -M_i, -M_i +1, \ldots, R_i\}$. A negative stock level corresponds to the number of backorders.
Allocating a resource level $a_i$ yields a production rate given by a function $\rho_i(a_i, p_i)$ where $p_i$ is an exogenous Markovian noise that affects the production rate.
The goal is to minimize the total cost, which consists of holding, back-ordering, and lost sales costs.

\textbf{Online stochastic ad matching \citep{feldman2009online}.}
We study the problem of matching $N=6$ advertisers to arriving impressions. In each period, an impression of type %
$e_t$ arrives according to a Markov chain.
An action $a_{t,i} \in \{0,1\}$ assigns impression $e_t$  to advertiser $i$, with a constraint that exactly one advertiser is selected: $\sum_{i=1}^N a_{t,i} = 1$.
Advertiser states represent the number of remaining ads to display and evolves according to $s_{t+1,i} = s_{t,i} - a_{t,i}$.
The objective is to maximize the discounted sum of expected rewards for all advertisers.

In Figure \ref{F:figures_qualitative}, we show how WCQL's projection method helps it learn a more accurate $Q$ function more quickly than competing tabular methods. The first panel, Figure \ref{F:EVC_bounds}, shows an example evolution of WCQL's projected value function $Q'$, along with the evolution of the upper bound. We compare this to the evolution of the action-value function in absence of the projection step. In the second panel, Figure \ref{F:EVC_re}, we plot the \emph{relative error} between the learned value functions of various algorithms compared to the optimal value function. Both plots are from the EV charging example. Detailed descriptions of the results are given in the figure's caption.

\begin{figure*}[h!]
    \centering
    \begin{subfigure}{0.45\textwidth}
		\includegraphics[width=\linewidth]{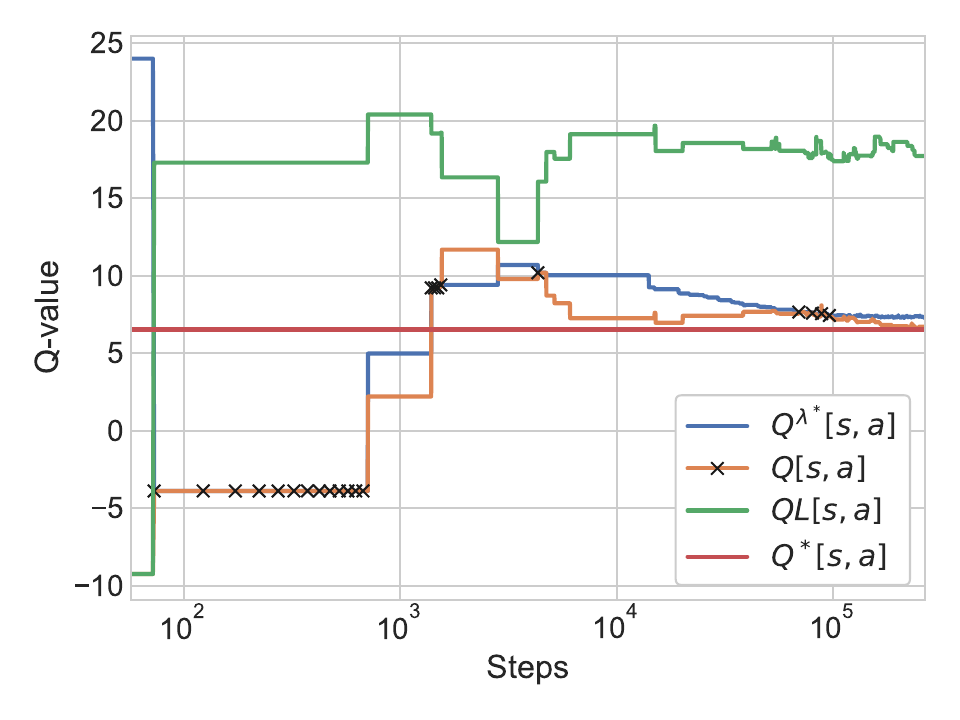}
		\caption{WCQL learning process illustration} \label{F:EVC_bounds}
	\end{subfigure}
	\begin{subfigure}{0.45\textwidth}
		\includegraphics[width=\linewidth]{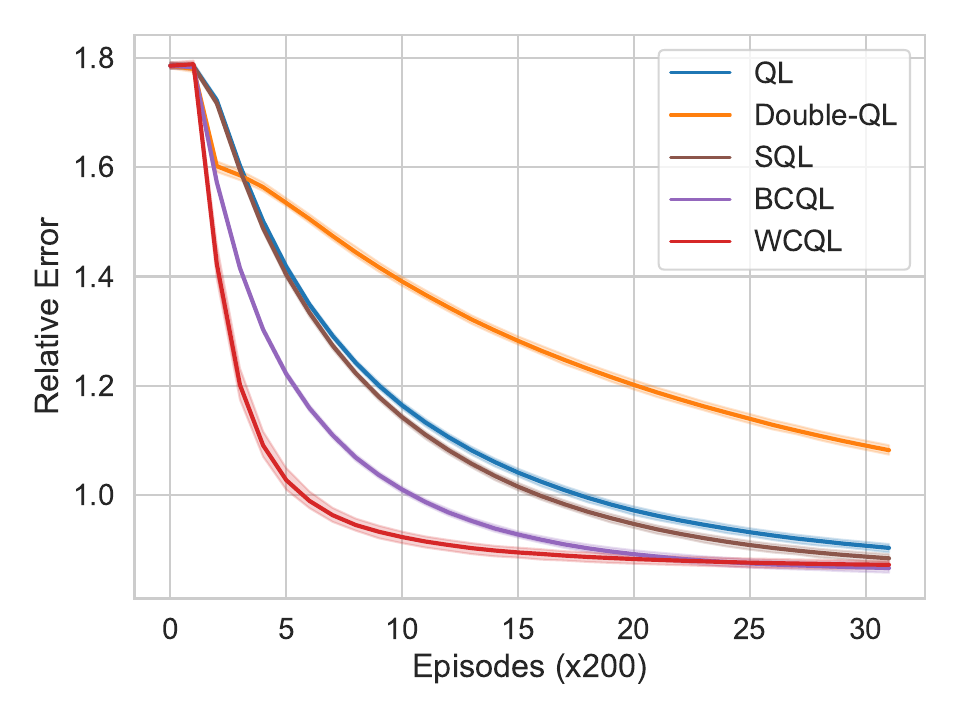}
		\caption{Relative error comparison} \label{F:EVC_re}
	\end{subfigure}
    \caption{\looseness-1 Behavior of WCQL's learning process compared to other tabular methods. In Figure \ref{F:EVC_bounds}, we show an example of the evolution of quantities associated with WCQL for a randomly selected state-action pair in the EV charging problem: upper bound (blue), WCQL Q-value (orange line with `x' marks indicating the points projected by the bound), standard Q-learning (green) and the optimal action-value (red).
    Note that sometimes the orange line appears above the blue line since WCQL's projection step is asynchronous, i.e.,  the projections are made only if the state is visited by the behavioral policy.
    Notice at the last marker, the bound moved the Q-value in orange to a ``good'' value, relatively nearby the optimal value. WCQL's Q-value then evolves on its own and eventually converges. On the other hand, standard QL (green), which follows the same behavior policy as WCQL, is still far from  optimal. 
    In Figure \ref{F:EVC_re}, we show the relative error, defined as $\|V_n - V^* \|_2/\|V^*\|_2$, where $V_n$ and $V^*$ are the state value functions derived from $Q$-iterate on iteration $n$ and $Q^*$, respectively. WCQL exhibits the steepest decline in relative error compared to the other algorithms. Note that since Lagrangian QL acts based on the Q-values of the subproblems, there is no Q-value for this algorithm on which to compute a relative error. } 
    \label{F:figures_qualitative}
\end{figure*}

\looseness-1 The results of our numerical experiments are shown in Figure \ref{F:figures}.
We see that in both the tabular and the function approximation cases, our algorithms outperformed the baselines, with WCQL and WCDQN achieving the best mean episode total rewards amongst all problems. From Figure \ref{F:EVC}, we see that although the difference between WCQL and Lagrangian QL is small towards the end of the training process, there are stark differences earlier on. In particular, the performance curve of WCQL shows significantly lower variance, suggesting more robustness across random seeds. Given that WCQL and Lagrangian QL differ only in the projection step, we can attribute the improved stability to the guidance provided by the Lagrangian bounds. Figure \ref{F:MINV} shows that for the multi-product inventory problem, the OTDQN, DQN, and Double DQN baselines show extremely noisy performance compared to WCDQN, whose significantly better and stable performance is likely due to the use of faster converging subagents and better use of the collected experience. Similarly, in the online stochastic ad matching problem, WCDQN significantly outperforms all the baselines.

\begin{figure*}[h!]
    \centering
    \begin{subfigure}{0.329\textwidth}
		\includegraphics[width=\linewidth]{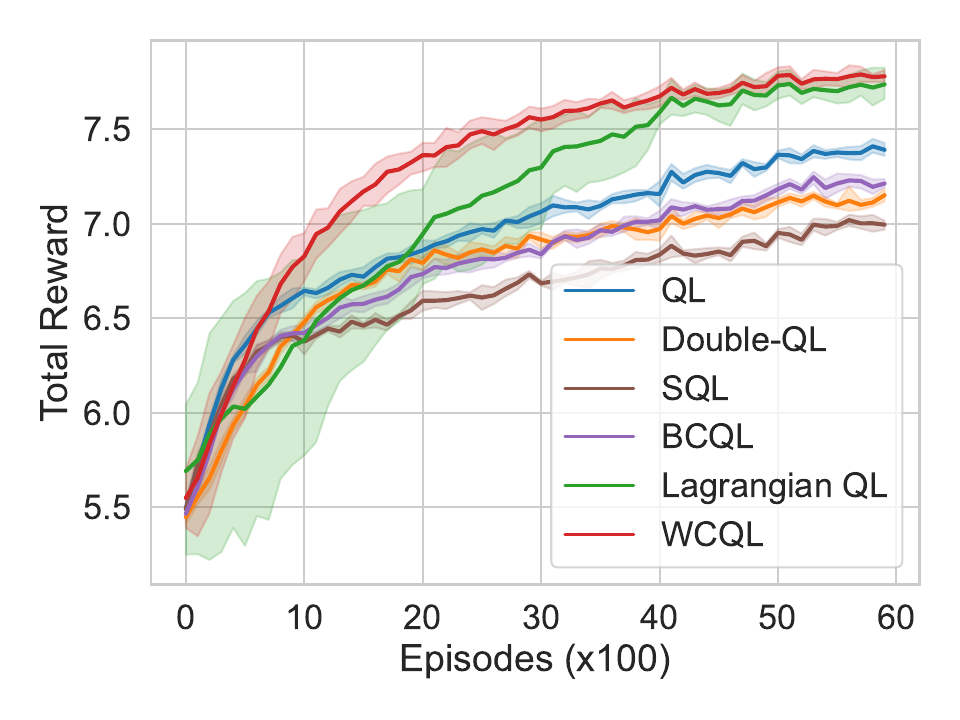}
		\caption{EV charging} \label{F:EVC}
	\end{subfigure}
	\begin{subfigure}{0.329\textwidth}
		\includegraphics[width=\linewidth] {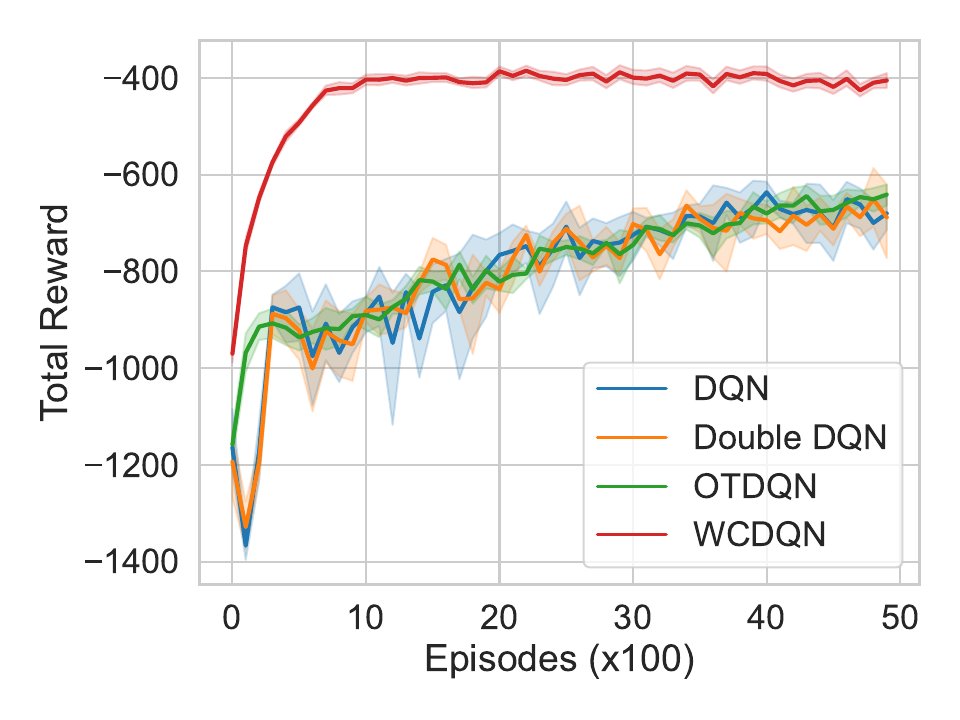}
		\caption{Multi-product inventory control} \label{F:MINV}
	\end{subfigure}
	\begin{subfigure}{0.329\textwidth}
    \includegraphics[width=\linewidth] {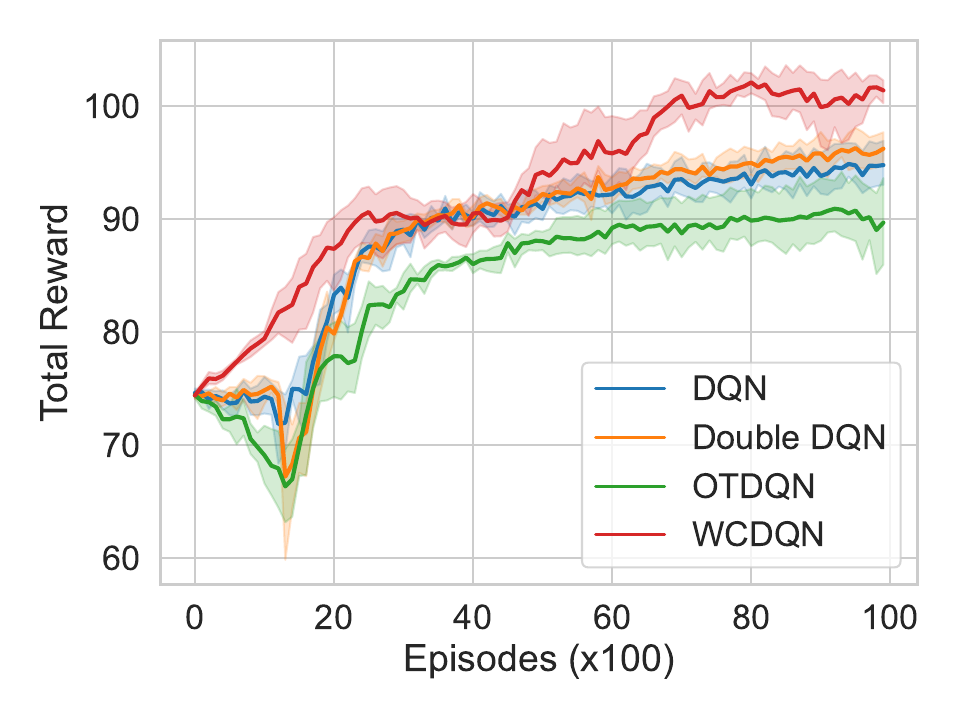}
		\caption{Online ad matching} \label{F:OnAd}
	\end{subfigure}
	\vspace{-0.25em}
    \caption{Benchmarking results for the WCQL (EV charging) and WCDQN (multi-product inventory control, online ad matching) against baseline methods. The plots show mean total rewards and their 95\% confidence intervals across 5 independent replications.}
    \label{F:figures}
\end{figure*}

\section{Conclusion}
In this study, we propose the WCQL algorithm for learning in weakly coupled MDPs and we show that our algorithm converges to the optimal action-value function. We then propose WCDQN, which extends the idea behind the WCQL algorithm to the function approximation case. Our algorithms are model-free and learn upper bounds on the optimal action-value using a combination of a Lagrangian relaxation and Q-learning. These bounds are then used within a constrained optimization approach to improve performance and make learning  more efficient. Our approaches significantly outperforms competing approaches on several benchmark environments.

\clearpage

\begin{ack}
This research was supported in part by the University of Pittsburgh Center for Research Computing, RRID:SCR\_022735, through the resources provided. Specifically, this work used the H2P cluster, which is supported by NSF award number OAC-2117681.
\end{ack}

\bibliographystyle{plainnat}
\bibliography{references.bib}

\clearpage

\appendix
\label{A:A}
\counterwithin{lem}{section}
\onecolumn
\begin{center}
\Large Appendix to \emph{Weakly Coupled Deep Q-Networks}
\end{center}

\section{Proofs}
\label{Appendix:WCDQNProofs}
\allowdisplaybreaks

\subsection{Proof of Proposition \ref{P:LagDP}}
\begin{proof}
We prove part the first part of the proposition (weak duality) by induction. 
First, define 
\begin{align*}
    Q_0^*(\s,\a) = r(\s,\a) \quad \text{and} \quad Q_0^{\lambda}(\s,\a) = \textstyle r(\s,\a) + \lambda^\intercal \left[\b(w) - \sum_{i=1}^N \d(s_i,a_i) \right],
\end{align*}
and suppose we run value iteration for both systems:
\begin{align*}
Q^*_{t+1}(\s,\a) &= \textstyle r(\s,\a) +\gamma \, \E \bigl[\max_{\a' \in \mathcal A(\s')} Q^*_t(\s',\a') \bigr],\\
Q_{t+1}^{\lambda}(\s,\a) &= \textstyle r(\s,\a) + \lambda^\intercal \left[\b(w) - \sum_{i=1}^N \d(s_i,a_i) \right] + \gamma \, \E\bigl[\max_{\a' \in \mathcal A} Q_t^{\lambda}(\s',\a')\bigr].
\end{align*}
It is well-known that, by the value iteration algorithm's convergence,
\[
Q^*(\s,\a) = \lim_{t\rightarrow\infty} Q_{t}^*(\s,\a)
\quad \text{and} \quad Q^{\lambda}(\s,\a) = \lim_{t\rightarrow\infty}   Q_{t}^{\lambda}(\s,\a).
\]

Consider a state $\s \in \mathcal S$ and a feasible action $\a \in \mathcal A(\s)$. We have,
\begin{align*}
    Q_0^{\lambda}(\s,\a) &= \textstyle r(\s,\a) + \lambda^\intercal \left[\b(w) - \sum_{i=1}^N \d(s_i,a_i) \right]
    \ge r(\s,\a) = Q^*_0(\s,\a).
\end{align*}
Suppose $Q_t^{\lambda}(\s,\a) \ge Q^*_t(\s,\a)$ holds for all $\s \in \mathcal S$ and $\a \in \mathcal A(\s)$ for some $t>0$ (induction hypothesis). Then,
\begin{align*}
    Q_{t+1}^{\lambda}(\s,\a) &= \textstyle r(\s,\a) + \lambda^\intercal \left[\b(w) - \sum_{i=1}^N \d(s_i,a_i)\right] + \gamma \, \E \bigl[ \max_{\a' \in \mathcal A} Q_t^{\lambda}(\s',\a') \bigr]\\ &
    \ge \textstyle r(\s,\a) + \lambda^\intercal \left[\b(w) - \sum_{i=1}^N \d(s_i,a_i) \right] +  \gamma \, \E \bigl [\max_{\a' \in \mathcal A(\s')} Q^*_t(\s',\a') \bigr]\\ &
    \ge r(\s,\a) + \textstyle \gamma \, \E \bigl[\max_{\a' \in \mathcal A(\s')} Q^*_t(\s',\a') \bigr]     = Q^*_{t+1}(\s,\a).
\end{align*}
Thus, it follows that $Q^{\lambda}(\s,\a) \ge Q^*(\s,\a)$.

For the proof of the second part of the proposition, define 
\begin{align*}
    \B_0(w) = \b(w) \quad \text{and} \quad \B_{t+1}(w) = \textstyle  \b(w) + \gamma\, \mathbb E \bigl [\B_t(w') \bigr].
\end{align*}
We use an induction proof. We have for all $(\s,\a) \in \mathcal S \times \mathcal A$,
\begin{align*}
    Q_0^{\lambda}(\s,\a) &=  r(\s,\a) + \lambda^\intercal \left[\b(w) - \sum_{i=1}^N \d(s_i,a_i) \right] \\ &
    = \sum_{i=1}^N \Bigl [ r_i(s_i,a_i) -   \lambda^\intercal  \d(s_i,a_i) \Bigr] + \lambda^\intercal \b(w) = \sum_{i=1}^N Q^{\lambda}_{0,i}(s_i,a_i) + \lambda^\intercal \B_0(w),
\end{align*}
where $Q^{\lambda}_{0,i}(s_i,a_i)=r_i(s_i,a_i)-\lambda^\intercal  \d(s_i,a_i)$.
Similarly, for all $(\s,\a) \in \mathcal S \times \mathcal A$,
\begin{align*}
    Q_1^{\lambda}(\s,\a) &=  r(\s,\a) + \lambda^\intercal \left[\b(w) - \sum_{i=1}^N \d(s_i,a_i) \right] +\textstyle \gamma\, \E \bigl [\max_{\a' \in \mathcal A}Q_0^{\lambda}(\s',\a') \bigr]\\ &
    =\sum_{i=1}^N \Bigl[ r_i(s_i,a_i) -   \lambda^\intercal  \d(s_i,a_i) \Bigr] + \lambda^\intercal \b(w)  + \gamma \, \E \left[\max_{\a' \in \mathcal A} \left\{\sum_{i=1}^N Q_{0,i}^{\lambda}(s_i',a_i') + \lambda^\intercal \B_0(w') \right\} \right] \\ &
    = \sum_{i=1}^N \Bigl [r_i(s_i,a_i) -  \lambda^\intercal  \d(s_i,a_i)
    + \textstyle \gamma  \, \E \bigl [\max_{a'_i \in \mathcal A_i} Q^{\lambda}_{0,i}(s'_i,a'_i) \bigr] \Bigr] 
   + \lambda^\intercal \Bigl( \b(w)+\gamma \, \E \bigl[\B_0(w') \bigr] \Bigr) \\ &
    = \sum_{i=1}^N Q^{\lambda}_{1,i}(s_i,a_i) + \lambda^\intercal \B_1(w).
\end{align*}
Continuing in this manner, we arrive at $Q^{\lambda}_{t}(\s,\a) = \sum_{i=1}^N Q^{\lambda}_{t,i}(s_i,a_i) + \lambda^\intercal \B_t(w)$. Finally, we have
\begin{align*}
    Q^{\lambda}(\s,\a) &= \lim_{t\rightarrow \infty}  Q^{\lambda}_{t}(\s,\a) \\ &
    = \lim_{t\rightarrow \infty} \sum_{i=1}^N Q^{\lambda}_{t,i}(s_i,a_i) + \lambda^\intercal \B_t(w)
    = \sum_{i=1}^N Q^{\lambda}_{i}(s_i,a_i) + \lambda^\intercal \B(w),
\end{align*}
which follows by the convergence of value iteration.
\end{proof}

\subsection{Proof of Theorem \ref{T:ConvWCQL}}

\begin{proof}
First, we define the Bellman operator $H$:
\begin{equation*}
\textstyle (HQ')(\s,\a) = r(\s,\a) + \gamma \E \left[\max_{\a'} Q'(\s',\a')\right],  
\end{equation*}
which is known to be a $\gamma$-contraction mapping. Next we define the random noise term
\begin{equation}
\textstyle
    \xi_n(\s,\a) =  \gamma \max_{\a' }Q'_n(\s',\a') - \gamma \E \left[\max_{\a'} Q'_n(\s',\a')\right]  .
    \label{E:ND}
\end{equation}
Analogously, for subproblem $i \in \{1,\ldots,N\}$, define the subproblem Bellman operator 
\begin{equation*}
\textstyle (H_iQ^{\lambda}_i)(s_i,a_i) = r_i(s_i,a_i) - \lambda^\intercal  \d(s_i,a_i) + \gamma \E \left[\max_{a_i'} Q^{\lambda}_i(s_i',a_i')\right],  
\end{equation*}
and random noise term
\begin{equation}
\textstyle
    \xi_{i,n}(s_i,a_i) =  \gamma \max_{a_i' }Q'_{i,n}(s_i',a_i') - \gamma \E \left[\max_{a_i'} Q'_{i,n}(s_i',a_i')\right].
    \label{E:iND}
\end{equation}
 The update rules of WCQL can then be written as
\begin{align}
    Q_{n+1}(\s,\a) &=  (1 - \alpha_n(\s,\a)) \, Q'_n(\s,\a) + \alpha_n(\s,\a) \, \left[(HQ'_{n})(\s,\a) + \xi_{n+1}(\s,\a) \right], 
        \nonumber \\  
   Q^{\lambda}_{i,n+1}(s_{i},a_{i}) &= Q^{\lambda}_{i,n}(s_{i},a_{i})   + \beta_{n}(s_{i},a_{i}) \left[ (H_iQ'_{i,n})(s_i,a_i) + \xi_{i,n+1}(s_i,a_i)  \right ],
        \label{E:subproblemQL} \\ 
       Q^{\lambda^*}_{n+1}(\s,\a) &= \min_{\lambda \in \Lambda} \;  \lambda^\intercal \B_n(w) + \sum_{i=1}^N Q^{\lambda}_{i,n}(s_{i},a_{i}),
        \nonumber \\
        Q'_{n+1}(\s,\a) &=  \min(Q_{n+1}(\s,\a), Q^{\lambda^*}_{n+1}(\s,\a)). 
        \label{E:**} &
\end{align}

\textbf{Parts \ref{T:1} and \ref{T:2}.} 
By the iteration described in (\ref{E:subproblemQL}), we know that for a fixed $\lambda$, we are running Q-learning on an auxiliary MDP with Bellman operator $H_i$, which encodes a reward $r_{i}(s_{i},a_{i}) - \lambda^\intercal  \d(s_i,a_i)$ and the transition dynamics for subproblem $i$. By the standard result for asymptotic convergence of Q-learning \citep{bertsekas1996neuro}, we have
\begin{equation}
 \lim_{n \rightarrow \infty} Q^{\lambda}_{i,n}(s_i,a_i) = Q_{i}^{\lambda}(s_i,a_i).
\label{eq:subproblemQLconv}
\end{equation}
 We now prove the result in \ref{T:2}: $\lim_{n \rightarrow \infty} Q^{\lambda}_n(\s,\a) \ge Q^*(\s,\a)$. 
 Recall that 
\begin{equation*}
    Q_{n}^{\lambda}(\s,\a) = 
    \lambda^\intercal \B_{n}(w)
    + \sum_{i=1}^N Q_{i,n}^{\lambda}(s_i,a_i).
\end{equation*}
 By standard stochastic approximation theory, $\lim_{n \rightarrow \infty} \B_n(w) = \B(w)$ for all $w$ \citep{kushner2003stochastic}. Combining this with (\ref{eq:subproblemQLconv}), we have $\lim_{n \rightarrow \infty} Q_n^{\lambda}(\s,\a) = Q^{\lambda}(\s,\a)$ for all $(\s,\a)$, and to conclude that this limit is an upper bound on $Q^*(\s,\a)$, we apply Proposition \ref{P:LagDP}.

\textbf{Part \ref{T:3}.} 
Assume without loss of generality that $Q^*(\s,\a)=0$ for all state-action pairs $(\s,\a)$. This can be established by shifting the origin of the coordinate system. We also assume that $\alpha_n(\s,\a) \le 1$ for all $(\s,\a)$ and $n$. 
We proceed via induction.  Note that the iterates 
$Q'_n(\s,\a)$ are bounded in the sense that there exists a constant $D_0=R_{\max}/(1-\gamma)$, $R_{\max}= \max_{(\s,\a)} |r(\s,\a)|$, such that $| Q'_n(\s,\a)| \le D_0$ for all $(\s,\a)$ and iterations $n$ \citep{even2003learning}.
Define the sequence $D_{k+1}=(\gamma +  \epsilon) \, D_k$, such that $\gamma + \epsilon < 1$ and $\epsilon > 0$. Clearly, $D_k\rightarrow 0$.
Suppose that there exists a random variable $n_k$, representing an iteration threshold such that for all $(\s,\a)$,
$$ -D_k \le Q'_n(\s,\a) \le \min\{D_k, Q^{\lambda^*}_n(\s,\a)\}, \quad  \forall \, n \ge n_k.$$ 
We will show that there exists some iteration $n_{k+1}$ such that 
$$ -D_{k+1}  \le Q'_n(\s,\a) \le \min\{D_{k+1}, Q^{\lambda^*}_n(\s,\a)\} \quad \forall \, (s,a), \, n \ge n_{k+1}, $$
which implies that $Q'_n(\s,\a)$ converges to $Q^*(\s,\a)=0$ for all $(\s,\a)$.

By part \ref{T:2}, we know that for all $\eta>0$, with probability 1, there exists some finite iteration $n_0$ such that for all $n\ge n_0$,
\begin{equation}
Q^*(\s,\a) - \eta \le Q^{\lambda^*}_n(\s,\a).
\label{Q*etaQlmb}
\end{equation}

Now, we define an accumulated noise process started at $n_k$ by $W_{n_k,n_k}(\s,\a) = 0$, and 
\begin{align}
W_{n+1,n_k}(\s,\a) = \left(1- \alpha_n (\s,\a)\right) W_{n,n_k}(\s,\a)+ \alpha_n(\s,\a) \, \xi_{n+1}(\s,\a),\quad \forall \, n \ge n_k,
\label{E:Wnnk}
\end{align}
where $\xi_n(\s,\a)$ is as defined in \eqref{E:ND}. 
Let $\mathcal F_n$ be the entire history of the algorithm up to the point where the step sizes at iteration $n$ are selected.
Using Corollary 4.1 in \cite{bertsekas1996neuro} which states that under Assumption \ref{Assumption} on the step size $\alpha_n(\s,\a)$, and if $\E[\xi_n(\s,\a) \, | \,\mathcal{F}_n]=0$ and $\E[\xi^2_n(\s,\a) \,|\,\mathcal{F}_n] \le A_n$, where the random variable $A_n$ is bounded with probability 1, the sequence $W_{n+1,n_k}(\s,\a)$ defined in \eqref{E:Wnnk} converges to zero, with probability 1.
From our definition of the stochastic approximation noise $\xi_n(\s,\a)$ in \eqref{E:ND}, we have 
\[
\textstyle \E[\xi_n(\s,\a) \, | \,\mathcal{F}_n]=0 \quad \text{and} \quad \E[\xi^2_n(\s,\a) \,|\,\mathcal{F}_n] \le C (1 + \max_{\s',\a'} Q_n^{'2}(\s',\a')),\] where $C$ is a constant. 
Then, it follows that 
\begin{align*}
\lim_{n\rightarrow \infty} W_{n,n_k} (\s,\a) = 0 \quad \forall \, (\s,\a), \; n_k.
\end{align*}
We use the following lemma from \cite{bertsekas1996neuro} to bound the accumulated noise.
\begin{lem}[Lemma 4.2 in \cite{bertsekas1996neuro}]
For every $\delta >0$, with probability one, there exists some $n'$ such that $|W_{n,n'}(\s,\a)|\le \delta$, for all $n \ge n'$.
\label{L:L4.2Neuro}
\end{lem}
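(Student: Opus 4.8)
This is the classical accumulated-noise estimate (Lemma 4.2 of \cite{bertsekas1996neuro}), and the plan is to deduce it from the pointwise convergence $\lim_{n\to\infty} W_{n,n_k}(\s,\a) = 0$ (valid for every fixed starting index $n_k$, and which we will use with $n_k=0$) that the preceding paragraph has already established. The one structural ingredient needed is a semigroup-type identity linking the accumulated-noise processes started at different times. First I would unroll the recursion \eqref{E:Wnnk}: writing $\Phi_{n,m} = \prod_{j=m}^{n-1}\bigl(1-\alpha_j(\s,\a)\bigr)$ for $m\le n$, with the empty product $\Phi_{m,m}=1$, this yields the closed form
\[
W_{n,m}(\s,\a) = \sum_{j=m}^{n-1} \Phi_{n,j+1}\,\alpha_j(\s,\a)\,\xi_{j+1}(\s,\a).
\]

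Next I would split the sum that defines $W_{n,0}(\s,\a)$ at the index $m$ and use the multiplicative property $\Phi_{n,j+1} = \Phi_{n,m}\,\Phi_{m,j+1}$ (valid when $j+1\le m\le n$) to obtain
\[
W_{n,0}(\s,\a) = \Phi_{n,m}\,W_{m,0}(\s,\a) + W_{n,m}(\s,\a), \qquad 0 \le m \le n,
\]
and hence $W_{n,m}(\s,\a) = W_{n,0}(\s,\a) - \Phi_{n,m}\,W_{m,0}(\s,\a)$. Because the proof has already assumed $0 \le \alpha_j(\s,\a) \le 1$, every factor of $\Phi_{n,m}$ lies in $[0,1]$, so $0 \le \Phi_{n,m}\le 1$ and therefore $|W_{n,m}(\s,\a)| \le |W_{n,0}(\s,\a)| + |W_{m,0}(\s,\a)|$.

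To conclude, fix $\delta>0$ and restrict to the probability-one event on which $W_{n,0}(\s,\a)\to 0$. Choose $n'$ large enough that $|W_{n,0}(\s,\a)| \le \delta/2$ for all $n\ge n'$; then, taking $m=n'$ in the inequality above gives, for every $n\ge n'$,
\[
|W_{n,n'}(\s,\a)| \le |W_{n,0}(\s,\a)| + |W_{n',0}(\s,\a)| \le \delta,
\]
which is the asserted bound. Since $\mathcal S\times\mathcal A$ is finite in the tabular setting, intersecting the finitely many full-measure events makes the conclusion hold simultaneously for all $(\s,\a)$.

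I do not expect a genuine obstacle here; the crux is simply getting the telescoping identity right so that the discrepancy $W_{n,0}-W_{n,m}$ is damped by the contraction-like factor $\Phi_{n,m}$ rather than accumulating, together with invoking $\alpha_j(\s,\a)\le 1$ to keep $\Phi_{n,m}\in[0,1]$ (without which the estimate would not close). The only remaining subtlety is measure-theoretic: $W_{n,0}\to 0$ holds on a full-measure set for each state-action pair, and finiteness of $\mathcal S\times\mathcal A$ lets us pass to a single good event on which $n'$ can be selected.
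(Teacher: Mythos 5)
Your proof is correct. Note that the paper itself does not prove this statement at all --- it is imported verbatim as Lemma 4.2 of \cite{bertsekas1996neuro} and used as a black box --- so there is no in-paper argument to compare against; your self-contained derivation is essentially the standard one for this lemma. The steps check out: unrolling the recursion \eqref{E:Wnnk} gives the closed form, the product decomposition $\Phi_{n,j+1}=\Phi_{n,m}\,\Phi_{m,j+1}$ yields the semigroup identity $W_{n,0}=\Phi_{n,m}W_{m,0}+W_{n,m}$, and since the surrounding proof already assumes $\alpha_n(\s,\a)\le 1$, the damping factor lies in $[0,1]$, so $|W_{n,n'}(\s,\a)|\le|W_{n,0}(\s,\a)|+|W_{n',0}(\s,\a)|\le\delta$ once $n'$ is chosen (sample-path dependently, which the lemma permits) so that $\sup_{n\ge n'}|W_{n,0}(\s,\a)|\le\delta/2$ on the full-measure event where $W_{n,0}(\s,\a)\to 0$; finiteness of $\mathcal S\times\mathcal A$ then gives a single good event and a common $n'$ across state-action pairs. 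The convergence $W_{n,0}(\s,\a)\to 0$ that you invoke is exactly what the paper establishes just before citing the lemma (via Corollary 4.1 of \cite{bertsekas1996neuro}), so your argument correctly reduces the lemma to a fact already in hand, and it buys the reader a proof that the paper leaves to the reference.
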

Now, by Lemma \ref{L:L4.2Neuro}, let $n_{k' }\ge \max(n_k, n_0)$ such that, for all $n \ge n_{k'}$ we have 
\[
|W_{n,n_{k'}}(\s,\a)|\le \gamma \epsilon D_k < \gamma D_k.
\]
Let $\nu_{k }\ge n_{k'}$ such that, for all $n \ge \nu_{k}$, by \eqref{Q*etaQlmb} we have 
$$\gamma \epsilon D_k - \gamma D_k \le  Q^{\lambda^*}_n(\s,\a).$$
Define another sequence $Y_n$ that starts at iteration $\nu_k$.
\begin{align}
Y_{\nu_k}(\s,\a) = D_{k} \quad \text{and} \quad  Y_{n+1}(\s,\a) = (1 - \alpha_{n}(\s,\a)) \, Y_{n}(\s,\a) + \alpha_{n}(\s,\a) \, \gamma \,D_{k}
\label{E:Y}
\end{align}
Note that it is easy to show that the sequence $Y_n(\s,\a)$ in \eqref{E:Y} is decreasing, bounded below by $\gamma D_k$, and converges to $\gamma D_k$ as $n \rightarrow \infty$. Now we state the following lemma.
\begin{lem} For all state-action pairs $(\s,\a)$ and iterations $n \ge \nu_k$, it holds that:
\begin{equation}
 -Y_n(\s,\a) + W_{n,\nu_k}(\s,\a) \le  Q'_n(\s,\a) \le \min\{Q^{\lambda^*}_n(\s,\a),Y_n(\s,\a) + W_{n,\nu_k}(\s,\a)\}. \label{L:TL1}
\end{equation} 
\end{lem}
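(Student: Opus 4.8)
The plan is to prove the sandwich inequality \eqref{L:TL1} by induction on $n$, starting at $n = \nu_k$. The base case is immediate: at $n = \nu_k$ we have $W_{\nu_k, \nu_k} = 0$ and $Y_{\nu_k} = D_k$ by definition, while the outer induction hypothesis (the hypothesis on the level-$k$ threshold) gives $-D_k \le Q'_{\nu_k}(\s,\a) \le \min\{D_k, Q^{\lambda^*}_{\nu_k}(\s,\a)\}$, which is exactly \eqref{L:TL1} at $n = \nu_k$ since $\nu_k \ge n_k$. For the inductive step, I would assume \eqref{L:TL1} holds at iteration $n$ and push it to $n+1$, treating the upper and lower bounds separately.

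For the \emph{upper bound}, recall from \eqref{E:**} that $Q'_{n+1}(\s,\a) = \min\{Q_{n+1}(\s,\a), Q^{\lambda^*}_{n+1}(\s,\a)\}$, so it automatically satisfies $Q'_{n+1}(\s,\a) \le Q^{\lambda^*}_{n+1}(\s,\a)$; it remains to show $Q_{n+1}(\s,\a) \le Y_{n+1}(\s,\a) + W_{n+1,\nu_k}(\s,\a)$. Here I would expand $Q_{n+1}(\s,\a) = (1-\alpha_n)Q'_n(\s,\a) + \alpha_n[(HQ'_n)(\s,\a) + \xi_{n+1}(\s,\a)]$. Using the induction hypothesis $Q'_n(\s,\a) \le Y_n(\s,\a) + W_{n,\nu_k}(\s,\a)$ on the first term, the contraction property of $H$ together with $Q^* = 0$ (so $\|HQ'_n\|_\infty = \|HQ'_n - HQ^*\|_\infty \le \gamma \|Q'_n\|_\infty \le \gamma D_k$ — using that $|Q'_n| \le D_k$ componentwise, itself a consequence of the induction hypothesis combined with $Y_n \le D_k$ and control of the noise, or more directly from the fact that we only need $HQ'_n \le \gamma D_k$) on the second term, and peeling off the $\alpha_n \xi_{n+1}$ into the definition of $W_{n+1,\nu_k}$ \eqref{E:Wnnk}, everything collapses via the recursion \eqref{E:Y} for $Y$: $(1-\alpha_n)Y_n + \alpha_n \gamma D_k = Y_{n+1}$, and $(1-\alpha_n)W_{n,\nu_k} + \alpha_n\xi_{n+1} = W_{n+1,\nu_k}$. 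So the upper bound propagates.

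For the \emph{lower bound}, I need $Q'_{n+1}(\s,\a) \ge -Y_{n+1}(\s,\a) + W_{n+1,\nu_k}(\s,\a)$. This is where the projection step requires care, since $Q'_{n+1} = \min\{Q_{n+1}, Q^{\lambda^*}_{n+1}\}$ could in principle be pulled down by $Q^{\lambda^*}_{n+1}$. The two cases: if the min is $Q_{n+1}(\s,\a)$, then I argue as in the upper bound but with inequalities reversed — using $(HQ'_n)(\s,\a) \ge -\gamma D_k$ and the induction hypothesis $Q'_n(\s,\a) \ge -Y_n(\s,\a) + W_{n,\nu_k}(\s,\a)$, the recursions for $Y$ and $W$ again give exactly $-Y_{n+1} + W_{n+1,\nu_k}$. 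If instead the min is $Q^{\lambda^*}_{n+1}(\s,\a)$, I use the choice of $\nu_k$: for $n \ge \nu_k$ we arranged (via \eqref{Q*etaQlmb} with $\eta = \gamma D_k - \gamma\epsilon D_k$, i.e. $Q^* - \eta \le Q^{\lambda^*}_n$) that $Q^{\lambda^*}_{n+1}(\s,\a) \ge \gamma\epsilon D_k - \gamma D_k$. I then need $\gamma \epsilon D_k - \gamma D_k \ge -Y_{n+1}(\s,\a) + W_{n+1,\nu_k}(\s,\a)$; since $Y_{n+1} \ge \gamma D_k$ (the $Y$ sequence decreases to $\gamma D_k$ from above) and $W_{n+1,\nu_k} \le \gamma\epsilon D_k$ (by the choice of $n_{k'}$ and $\nu_k \ge n_{k'}$), the right side is at most $-\gamma D_k + \gamma \epsilon D_k$, so the inequality holds. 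This closes the induction.

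The main obstacle is the bookkeeping in the lower-bound case when the projection is active: one must be sure that all three thresholds ($n_{k'}$ controlling $|W| \le \gamma\epsilon D_k$, the $n_0$ from part \ref{T:2}, and $\nu_k \ge n_{k'}$ controlling the lower bound on $Q^{\lambda^*}_n$) are chosen consistently so that the numbers line up — in particular that $Y_{n+1} \ge \gamma D_k$ and $W_{n+1,\nu_k} \le \gamma\epsilon D_k$ simultaneously force $Q^{\lambda^*}_{n+1} \ge -Y_{n+1} + W_{n+1,\nu_k}$. Everything else is a routine unrolling of the two coupled linear recursions \eqref{E:Y} and \eqref{E:Wnnk} against the contraction bound on $H$. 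After \eqref{L:TL1} is established, one takes $n \to \infty$: $W_{n,\nu_k} \to 0$ and $Y_n \to \gamma D_k$, so eventually $|Q'_n| \le \gamma D_k + \gamma\epsilon D_k = (\gamma + \gamma\epsilon)D_k \le (\gamma+\epsilon)D_k = D_{k+1}$ (adjusting $\epsilon$ harmlessly), giving the next threshold $n_{k+1}$ and completing the outer induction, hence $Q'_n \to 0 = Q^*$.
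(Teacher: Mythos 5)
Your proposal is correct and follows essentially the same induction as the paper: the base case from the level-$k$ hypothesis at $\nu_k$ (where $Y_{\nu_k}=D_k$, $W_{\nu_k,\nu_k}=0$), the upper bound by expanding $Q_{n+1}$, using $(HQ'_n)(\s,\a)\le\gamma\|Q'_n\|\le\gamma D_k$ and the coupled recursions \eqref{E:Y} and \eqref{E:Wnnk}, and then passing through the projection via the min with $Q^{\lambda^*}_{n+1}$. Your explicit case analysis for the lower bound when the projection is active—invoking $Q^{\lambda^*}_{n+1}(\s,\a)\ge\gamma\epsilon D_k-\gamma D_k$ from the choice of $\nu_k$ together with $Y_{n+1}\ge\gamma D_k$ and $|W_{n+1,\nu_k}|\le\gamma\epsilon D_k$—is exactly the step the paper compresses into ``the left hand side can be proved similarly,'' so you have filled in, rather than deviated from, the paper's argument.
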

\begin{proof}
We focus on the right hand side inequality, the left hand side can be proved similarly. 
For the base case $n = \nu_k$, the statement holds because $Y_{\nu_k}(\s,\a) = D_k$ and $W_{\nu_k,\nu_k}(\s,\a) = 0$. We assume it is true for $n$ and show that it continues to hold for $n+1$:
\begin{align*}
Q_{n+1}(\s,\a) &= (1 - \alpha_n(\s,\a)) Q'_n(\s,\a) + \alpha_n(\s,\a)\, \left[(HQ'_{n})(\s,\a) + \xi_{n+1}(\s,\a) \right]\\
 &\hspace{3pt}\begin{aligned}[t]\le (1 - \alpha_n(\s,\a))  &\min\{Q^{\lambda^*}_n(\s,\a),Y_n(\s,\a) + W_{n,\nu_k}(\s,\a)\} \\
 & + \alpha_n(\s,\a) \, (HQ'_{n})(\s,\a) + \alpha_n(\s,\a) \, \xi_{n+1}(\s,\a)\end{aligned}\\
 &\le (1 - \alpha_n(\s,\a)) \, (Y_n(\s,\a) + W_{n,\nu_k}(\s,\a)) + \alpha_n(\s,\a) \, \gamma D_k + \alpha_n(\s,\a) \, \xi_{n+1}(\s,\a)\\
 &\le Y_{n+1}(\s,\a) + W_{n+1,\nu_k}(\s,\a),
\end{align*}
where we used  $(HQ'_{n}) \le \gamma \Vert Q'_{n} \Vert \le \gamma D_k$. Now, we have
\begin{align*}
Q'_{n+1}(\s,\a) &= \min (Q^{\lambda^*}_{n+1}(\s,\a),  Q_{n+1}(\s,\a)) \\ 
&\le  \min \{Q^{\lambda^*}_{n+1}(\s,\a), Y_{n+1}(\s,\a) + W_{n+1,\nu_k}(\s,\a) \}.
\end{align*}
The inequality holds because 
\[Q_{n+1}(\s,\a) \le Y_{n+1}(\s,\a) + W_{n+1,\nu_k}(\s,\a),
\]
which completes the proof. 
\end{proof}
\noindent
Since $Y_{n}(\s,\a) \rightarrow \gamma  D_{k}$ and $W_{n,\nu_k}(\s,\a) \rightarrow 0$, we have
\[
{\limsup}_{n\rightarrow \infty} \Vert Q'_{n} \Vert \le \gamma D_{k} < D_{k+1}.
\]
Therefore, there exists some time $n_{k+1}$ such that $$ -D_{k+1}  \le Q'_n(\s,\a) \le \min\{D_{k+1}, Q^{\lambda^*}_{n}(\s,\a)\} \ \ \forall \, (\s,\a), \, n \ge n_{k+1},$$ 
which completes the induction.

\end{proof}

\section{Weakly Coupled Q-learning Algorithm Description}
\label{Appendix:WCQL}
\begin{algorithm}
\caption{Weekly Coupled Q-learning}
\begin{algorithmic}[1]
\State \textbf{Input}: Lagrange multiplier set $\Lambda$, initial state distribution $S_0$.
\State Initialize Q-table estimates $Q_0$, $\{Q_{0, i}\}_{i=1}^N$. Set $Q_0' = Q_0$.
\For{$n = 0, 1, 2, \ldots$}
\State Take an $\epsilon$-greedy behavioral action $\a_n$ with respect to $Q'_n(\s_n, \a)$. 
\State {\color{blue}{\tt // Estimate upper bound by combining subagents}}
\For{$i = 1, 2, \ldots, N$}
\State Update each subproblem value functions $Q^\lambda_{i, n+1}$ according to \eqref{eq:wcql_sub_update}.
\EndFor
\State Update right-hand-side estimate $\B_{n+1}(w_n)$ according to \eqref{E:B}.
\State Using \eqref{E:lagQlam} and \eqref{E:Qlagdual}, combine subproblems to obtain $ Q^{\lambda^*}_{n+1}(\s_n,\a) $ for all $\a \in \mathcal A(\s_n)$.
\State {\color{blue}{\tt // Main agent standard update, followed by projection}}
\State Do standard Q-learning update using \eqref{Qupdate} to obtain $Q_{n+1}$.
\State Perform upper bound projection step: $Q'_{n+1}(\s,\a) =  Q^{\lambda^*}_{n+1}(\s,\a) \wedge Q_{n+1}(\s,\a)$
\EndFor
\end{algorithmic}
\end{algorithm}

\section{Weakly Coupled DQN Algorithm Implementation}
\label{Appendix:WCDQN_alg}
In our implementation of WCDQN, the  subproblem $Q_i^{\lambda}$-network in Algorithm \ref{A:WCDQN} follows the standard network architecture as in \cite{mnih2013playing}, where given an input state $s_i$ the network predicts the $Q$-values for all actions. This mandates that all the subproblems have the same number of actions. To address different subproblem action spaces, we can change the network architecture to receive the state-action pair $(s_i, a_i)$ as input and output the predicted $Q$-value. This simple change does not interfere or affect WCDQN's main idea. 

Our code is available at \url{https://github.com/ibrahim-elshar/WCDQN_NeurIPS}.

\section{Numerical Experiment Details}
\label{Appendix:WCDQN-Numerical-Exp-Details}

A discount factor of $0.9$ is used for the EV charging problem and $0.99$ for the multi-product inventory and online stochastic ad matching problems. 
In the tabular setting, we use a polynomial learning rate that depends on the state-action pairs visitation given by $\alpha_n(\s,\a)= 1/ \nu_n(\s,\a)^r$, where $\nu_n(\s,\a)$ represent the number of times $(\s,\a)$ has been visited up to iteration $n$, and $r=0.4$.
We also use an $\epsilon$-greedy exploration policy, given by $\epsilon(s)= 1/ \nu(\s)^e$,  where $\nu(\s)$ is the number of times the state $\s$ has been visited. We set $e=0.4$.
In the function approximation setting,  we use an $\epsilon$-greedy policy that decays $\epsilon$ from $1$ to $0.05$ after $30,000$ steps.
All state-action value functions are initialized randomly.
Experiments were ran on a shared memory cluster with dual 12-core Skylake CPU (Intel Xeon Gold 6126 2.60 GHz) and 192 GB RAM/node.

\subsection{EV charging deadline scheduling \citep{yu2018deadline}}
\label{Appendix:EV}
In this problem, there are in total three charging spots $N=3$.
Each spot represents a subproblem with state $(c_{t}, B_{t,i}, D_{t,i})$, where $c_t \in \{0.2, 0.5, 0.8 \}$ is the exogenous electric cost, $B_{t,i} \le 2$ is the amount of charge required and $D_{t,i} \le 4$ is the remaining time until the EV leaves the system. The state space size is 36 for each subproblem.
At a given period $t$, the action of each subproblem is whether to charge an EV occupying the charging spot $a_{t,i}=1$ or not $a_{t,i}=0$.
A feasible action is given by 
$ \sum_{i=1}^N a_{t,i} \le b(c_t),$
where $b(0.2)=3, b(0.5)=2$, and $b(0.8)=1$.
The reward of each subproblem is given by
\begin{equation*}
r_i\bigl((c_t, B_{t,i}, D_{t,i}), a_{t,i}\bigr) = \begin{cases}
    & (1-c_t)\,a_{t,i} \quad \text{ if } B_{t,i}> 0, D_{t,i} > 1, \\ &
    (1-c_t)\,a_{t,i} - F(B_{t,i}-a_{t,i}) \text{ if } B_{t,i} >0, D_{t,i} =1, \\ &
    0, \quad \text{ otherwise,}
\end{cases}
\end{equation*}
where $F(B_{t,i}-a_{t,i})= 0.2\,(B_{t,i}-a_{t,i})^2$ is a penalty function for failing to complete charging the EV before the deadline.
The endogenous state of each subproblem evolves such that $(B_{t+1,i}, D_{t+1,i})=(B_{t,i} - a_{t,i}, D_{t,i}-1)$ if $D_{t,i} >1$, and  $(B_{t+1,i}, D_{t+1,i})=(B,D)$ with probability $q(D,B)$ if $D_{t,i} \le 1$, where $q(0,0)=0.3$ and $q(B,D)=0.7/11$ for all $B>0$ and $D>0$. The exogenous state $c_t$ evolves following the transition probabilities given by: $$q(c_{t+1}\,|\,c_t)= \begin{pmatrix}
0.4 & 0.3 & 0.3 \\
0.2 & 0.5 & 0.3 \\
0.6 & 0.2 & 0.2
\end{pmatrix}.$$

\subsection{Multi-product inventory control with an exogenous production rate \citep{hodge2011dynamic}}
We consider manufacturing $N=10$ products.
The exogenous demand $D_{t,i}$ for each product $i \in  \{1,2,\ldots,10 \}$ follows a Poisson distribution with  mean value $\mu_i$. 
The maximum storage capacity and the maximum number of allowable backorders (after which lost sales costs incur) for product $i$ are given by $R_i$ and $M_i$, respectively.

The state for subproblem $i$ is given by $(x_{t,i}, p_t)$, where $x_{t,i} \in X_i=\{ -M_i, -M_i +1, \ldots, R_i\}$ is the inventory level for product $i$,  and $p_{t}$ is an exogenous and Markovian noise with support $[0.8 , 1]$.
A negative stock level corresponds to the number of backorders.
For subproblem $i$, the action $a_{t,i}$ is the number of resources allocated to the product $i$.
The maximum number of resources available for all products is $U=3$, so feasible actions must satisfy $\sum_{i} a_{t,i} \le 3$.

Allocating a resource level $a_{t,i}$ yields a production rate $ \rho_i(a_{t,i}, p_{t}) = (12 \, p_{t}\,a_{t,i} )/ (5.971 + a_{t,i})$.
The cost function for product $i$ is $c_i(p_{t}, x_{t,i}, a_{t,i})$ and represents the sum of the holding, backorders, and lost sales costs. We let $h_i$, $b_i$, and $l_i$ denote the per-unit holding, backorder, and lost sale costs, respectively.
The cost function $c_i(x_{t,i}, p_t, a_{t,i})$ is given by,
\begin{align*}
c_i(x_{t,i}, p_t, a_{t,i}) &= h_i (x_{t,i} + \rho_i(a_{t,i},p_{t}))_+ + b_i ( -x_{t,i} - \rho_i(a_{t,i},p_{t}))_+ \\ & +  l_i ( (D_{t,i} - x_{t,i} - \rho_i(a_{t,i},p_{t}))_+ - M_i)_+,
\end{align*}
where $(.)_+ = \max(.,0)$.
We summarize the cost parameters and the mean demand for each product in Table \ref{tab:MPENV}.
Finally, the transition for the inventory state of subproblem $i$ is given by %
\begin{align*}
x_{t+1,i} &= \max \bigl (\min(x_{t,i} + \rho_i(a_{t,i},p_{t}) - D_{t,i}, R_i), -M_i \bigr), 
\end{align*}
where the exogenous noise $p_t$ evolves according to a 
transition matrix sampled from a Dirichlet distribution whose parameters are each sampled (once per replication) from a $\textnormal{Uniform}(1,5)$ distribution.

\begin{table}[htbp]
\small
  \centering
  \caption{Multi-product inventory environment parameters}
  \vspace{5pt}
    \begin{tabular}{lrrrrrrrrrr}
    \toprule
    \textbf{Product} $i$  & 1 & 2 & 3 & 4 & 5 & 6 & 7 & 8 & 9 & 10 \\
    \midrule
    {Storage capacity} $R_i$ & 20    & 30    & 10    & 15    & 10 & 10 & 25 & 30 & 15 & 10 \\
    {Maximum backorders} $M_i$ & 5     & 5    & 5     & 5    & 5 & 5 & 5 & 5 & 5 & 5 \\
    {Mean demand } $\mu_i$ & 0.3   & 0.7   & 0.5   & 1.0   & 1.4 & 0.9 & 1.1 & 1.2 & 0.3 & 0.6\\
    {Holding cost} $h_i$ & 0.1 & 0.2 & 0.05 & 0.3 & 0.2 & 0.5 & 0.3 & 0.4 & 0.15 & 0.12\\
    {Backorder cost} $b_i$ & 3.0 & 1.2 & 5.15 &  1.3 & 1.1 & 1.1 & 10.3 & 1.05 & 1. & 3.1 \\
    {Lost sales cost} $l_i$ & 30.1 & 3.3 & 10.05 & 3.9 & 3.7 & 3.6 & 40.3 & 4.5 & 12.55 & 44.1 \\
    \bottomrule
    \end{tabular}%
  \label{tab:MPENV}%
\end{table}%

\subsection{Online stochastic ad matching \citep{feldman2009online}}
In this problem, a platform needs to match $N=6$ advertisers to arriving impressions \citep{feldman2009online}.  An impression $e_t \in E=\{1,2,\ldots,5\}$ arrives according to a discrete time Markov chain with transition probabilities given by $q(e_{t+1}\,|\,e_t)$, where each row of the transition matrix $q$ is sampled from a Dirichlet distribution whose parameters are sampled (once per replication) from $\textnormal{Uniform}(1,20)$. 

The action $a_{t,i}\in \{0,1\}$ is whether to assign  impression $e_t$ to advertiser $i$ or not. The platform can assign an impression to at most one advertiser:
$\sum_{i=1}^N a_{i,t} =1$. 

The state of advertiser $i$, $x_{i,t}$ gives the number of remaining ads to display and evolves according to $x_{t+1,i} = x_{t,i} -a_{t,i}$. The initial state is $x_0=(10, 11, 12, 10, 14, 9)$.
The reward obtained from advertiser $i$ in state $s_{t,i}=(x_{t,i},e_t)$ is $r_i(s_{t,i},a_{t,i})= l_{i,e_t} \min(x_{t,i}, a_{t,i})$, where the parameters $l_{i,e_t}$ are sampled (once per replication) from $\textnormal{Uniform}(1,4)$.

\subsection{Training parameters}
Each method was trained for 6{,}000 episodes for the EV charging problem, 5{,}000 for the multi-product inventory control problem, and 10{,}000 episodes for the online stochastic ad matching problem. The episode lengths for the EV charging, online ad stochastic ad matching, and multi-product inventory control problems are $50,30,$ and $25$, respectively. We performed 5 independent replications. 

We use a neural network architecture that consists of two hidden layers, with $64$ and $32$ hidden units respectively, for all algorithms. A rectified linear unit (ReLU) is used as the activation function for each hidden layer. The Adam optimizer \citep{kingma2014adam} with a learning rate of $1.0 \times 10^{-4}$ was used. For OTDQN, we use the same parameter settings as in \citet{he2016learning}.

For WCDQN, we use a Lagrangian multiplier $\lambda \in [0,10]$, with a $0.01$ discretization. We also used an experience buffer of size $100{,}000$ and initialized it with $10{,}000$ experience tuples that were obtained using a random policy.
For the WCDQN algorithm, we set the penalty coefficient $\kappa_U$ to $10$, after performing a small amount of manual hyperparameter tuning on the set $\{1, 2, 4, 10\}$.

\subsection{Sensitivity analysis of WCQL with respect to the number of subproblems}
We study the performance improvement from WCQL over vanilla Q-learning as the number of subproblems increases for the EV charging problem. We only vary the number of subproblems (from 2 to 5) and keep all other settings as defined in Appendix \ref{Appendix:EV}. The results, given in Table \ref{tab:WCQLnumSub}, show that the benefits of WCQL become larger as the number of subproblems increases. This provides some additional evidence for the practicality of our approach, especially in regimes where standard methods fail.

\begin{table}[htbp]
  \centering
  \small
  \caption{Cumulative reward and percent improvement of WCQL over QL on the EV-charging problem with a different number of subproblems.}
  \vspace{5pt}
    \begin{tabular}{lcccc}
    \toprule
    \multicolumn{1}{c}{\multirow{2}[2]{*}{Algorithm}} & \multicolumn{4}{c}{Number of Subproblems} \\
          & \multicolumn{1}{c}{2} & \multicolumn{1}{c}{3} & \multicolumn{1}{c}{4} & \multicolumn{1}{c}{5} \\
    \midrule
    QL    & 5.39  & 6.7   & 5.2   & 3.26 \\
    WCQL  & 5.35  & 7.14  & 6.28  & 4.66 \\
    Percent improvement & -0.7\% & 6.6\%  & 20.8\% & 42.9\% \\
    \bottomrule
    \end{tabular}%
  \label{tab:WCQLnumSub}%
\end{table}%

\section{Limitations and Future Work}
One interesting direction to explore for future work is to address the limitation of learning the Lagrangian upper bound using a fixed and finite set $\Lambda$. Instead, one can imagine the ability to learn the optimal value of $\lambda$ and concentrate the computational effort towards learning the Lagrangian upper bound for this particular $\lambda$, which could potentially lead to tighter bounds. A possible approach is to apply subgradient descent on $\lambda$, similar to what is done in \citet{hawkins2003langrangian}.

\end{document}